\documentclass[12pt]{colt2021}

\usepackage{times}

\usepackage{lmodern}
\usepackage{hyperref}       
\usepackage{booktabs}       
\usepackage{amsfonts}       
\usepackage{nicefrac}       
\usepackage{microtype}      

\PassOptionsToPackage{numbers, sort, compress}{natbib}

\usepackage{mathtools, verbatim}
\usepackage{color}
\definecolor{darkblue}{rgb}{0.0,0.0,0.2}
\hypersetup{colorlinks,breaklinks,
	linkcolor=darkblue,urlcolor=darkblue,
	anchorcolor=darkblue,citecolor=darkblue}
\usepackage{wrapfig}
\usepackage[small]{caption}
\usepackage[colorinlistoftodos,textsize=tiny]{todonotes} 
\usepackage{bbm}
\usepackage{xspace}
\usetikzlibrary{arrows}
\usetikzlibrary{shapes}

\makeatletter
\let\Ginclude@graphics\@org@Ginclude@graphics 
\makeatother

\newcommand{\Comments}{0}
\newcommand{\mynote}[2]{\ifnum\Comments=1\textcolor{#1}{#2}\fi}
\newcommand{\mytodo}[2]{\ifnum\Comments=1
	\todo[linecolor=#1!80!black,backgroundcolor=#1,bordercolor=#1!80!black]{#2}\fi}

\newcommand{\btw}[1]{}

\ifnum\Comments=1               
\setlength{\marginparwidth}{1in}
\fi

\newcommand{\reals}{\mathbb{R}}

\newcommand{\simplex}{\Delta_\Y}
\newcommand{\relint}[1]{\mathrm{relint}(#1)}
\newcommand{\interior}{\mathrm{int}\,}
\newcommand{\prop}[2][\mathcal{P}]{\mathrm{prop}_{#1}[#2]}
\newcommand{\elic}{\mathrm{elic}}
\newcommand{\eliccvx}{\mathrm{elic}_\mathrm{cvx}}
\newcommand{\conscvx}{\mathrm{cons}_\mathrm{cvx}}

\newcommand{\rank}{\mathrm{rank}}
\newcommand{\proj}{\mathrm{proj}}
\newcommand{\supp}{\mathrm{supp}}
\newcommand{\spn}{\mathrm{span}}

\newcommand{\range}{\mathrm{range}\,}
\newcommand{\zeros}[1]{\mathrm{ker}_\P\,#1}
\newcommand{\codim}{\mathrm{codim}}

\newcommand{\simplexp}{\Delta_{\Y'}}

\newcommand{\propdis}{\mu}
\newcommand{\affhull}{\mathrm{affhull}}

\newcommand{\C}{\mathcal{C}}
\newcommand{\D}{\mathcal{D}}
\newcommand{\E}{\mathbb{E}}

\renewcommand{\L}{\mathcal{L}}
\newcommand{\Lcvx}{\mathcal{L}^{\mathrm{cvx}}}

\newcommand{\R}{\mathcal{R}}
\renewcommand{\P}{\mathcal{P}}
\newcommand{\Sc}{\mathcal{S}}  
\newcommand{\Scr}{\mathcal{S}}  

\newcommand{\V}{\mathcal{V}}
\newcommand{\X}{\mathcal{X}}
\newcommand{\Y}{\mathcal{Y}}

\newcommand{\lbar}{\underline{L}} 

\newcommand{\im}{\mathrm{im}}
\newcommand{\Var}{\mathrm{Var}}
\newcommand{\CVaR}{\mathrm{CVaR}}

\newcommand{\exploss}[3]{\E_{#3} #1(#2,Y)}
\newcommand{\risk}[1]{\underline{#1}}
\newcommand{\Ind}[1]{\mathbf{I}\{{#1}\}}
\newcommand{\inprod}[2]{\langle #1, #2 \rangle}
\newcommand{\toto}{\rightrightarrows}
\newcommand{\ones}{\mathbbm{1}}

\newtheorem{condition}{Condition}

\DeclareMathOperator*{\argmin}{arg\,min}

\usepackage{thmtools, thm-restate}

\title{Unifying Lower Bounds on Prediction Dimension of Consistent Convex Surrogates}

\coltauthor{\Name{Jessie Finocchiaro} \Email{jessica.finocchiaro@colorado.edu}\\
	\Name{Rafael Frongillo} \Email{raf@colorado.edu}\\
	\Name{Bo Waggoner} \Email{bwag@colorado.edu}\\
	\addr CU Boulder}

\begin{document}

\maketitle

\begin{abstract}
Given a prediction task, understanding when one can and cannot design a consistent convex surrogate loss, particularly a low-dimensional one, is an important and active area of machine learning research. 
The prediction task may be given as a target loss, as in classification and structured prediction, or simply as a (conditional) statistic of the data, as in risk measure estimation.
These two scenarios typically involve different techniques for designing and analyzing surrogate losses.
We unify these settings using tools from property elicitation, and give a general lower bound on prediction dimension.
Our lower bound tightens existing results in the case of discrete predictions, showing that previous calibration-based bounds can largely be recovered via property elicitation.
For continuous estimation, our lower bound resolves on open problem on estimating measures of risk and uncertainty.
\end{abstract}

\section{Introduction}\label{sec:intro}
A surrogate loss function is an error measure that is related but not identical to one's target problem of interest.
Selecting a hypothesis by minimizing surrogate risk is one of the most widespread techniques in supervised machine learning.
There are two main reasons why a surrogate loss is necessary: (1) the target loss does not satisfy some desiderata, such as convexity, or (2) the goal is to estimate some target statistic and there is no target loss, as in many continuous estimation problems.
In both settings, a key criteria for choosing a surrogate loss is \emph{consistency}, a precursor to excess risk bounds and convergence rates.
Roughly speaking, consistency means that minimizing surrogate risk corresponds to solving the target problem of interest, i.e. in (1) the target risk is also minimized, or in (2) the continuous prediction approaches the true conditional statistic.

Despite the ubiquity of surrogate losses, we lack general frameworks to design and analyze consistent surrogates.
This state of affairs is especially dire when one seeks low \emph{prediction dimension}, the dimension of the surrogate prediction domain.
For example, in multiclass classification with $n$ labels, the prediction domain might be $\reals^n$.
In many type (1) settings, such as structured prediction and extreme classification, the prediction dimension can easily become intractably large, forcing one to sacrifice consistency for computational efficiency.
To understand whether this sacrifice is necessary, recent work developed tools like the feasible subspace dimension to lower bound the prediction dimension of any consistent convex surrogate~\citep{ramaswamy2016convex}.
Challenges of type (2) include risk measures such as conditional value at risk (CVaR), with applications in financial regulation, robust engineering design, and algorithmic fairness.
Risk measures provably cannot be specified via a target loss, and thus we seek a surrogate loss of low (or at least finite) prediction dimension.
Recent work~\citep{fissler2016higher,frongillo2020elicitation} gives prediction dimension bounds for some of these risk measures, but without the requirement that the surrogate be convex: bounds for convex surrogates are left as a major open question.

We present a unification of existing techniques to bound the prediction dimension of consistent convex surrogates in both settings above.
Applied to settings of type (1), we recover the feasible subspace dimension result of \citet{ramaswamy2016convex}, and give an example where our bound is even tighter.
For type (2), we give the first prediction dimension bounds for risk measures with respect to convex surrogates, addressing the open question above.
Our framework rests on \emph{property elicitation}, a weaker condition than calibration, as a tool to understand consistency across a wide variety of domains.

\paragraph{The ``four quadrants'' of problem types}
Above, we discuss a significant divergence in previous frameworks: constructing a surrogate given a \emph{target loss} versus a \emph{target statistic}.
In addition to the two possible targets, we may have one of two domains: a \emph{discrete} (i.e.\ finite) target prediction space, like a classification problem, or a \emph{continuous} one, like a regression or estimation problem.
We informally refer to the four resulting cases---target loss vs.\ target statistic, and discrete vs.\ continuous predictions---as the ``four quadrants'' of supervised learning problems, shown in Table~\ref{tab:quadrants}.
For further examples, see Appendix~\ref{app:omitted-examples}.

\paragraph{Literature on consistency and calibration}
We focus on the construction of consistent surrogate losses $L: \reals^d \times \Y \to \reals$, roughly meaning that minimizing $L$-loss corresponds to solving the target problem of interest.
When given a \emph{target loss} $\ell$, we roughly define $L$ to be consistent if minimizing $L$, and applying a link function, minimizes $\ell$ (Definition \ref{def:consistent-ell})~\citep{zhang2004statistical,bartlett2006convexity,tewari2007consistency,steinwart2007compare,ramaswamy2016convex}.
When given a target statistic such as the conditional quantile or variance, but no target loss, we introduce a notion of consistency in line with classical statistics (Definition \ref{def:consistent-prop}) \citep{gyorfi2006distribution, fan1998efficient,ruppert1997local}.
Here we will define $L$ to be consistent if minimizing $L$ and applying a link function yields estimates converging to the correct value.

A priori, it is not clear that compatible definitions of consistency could be given for both target statistics and target losses.
In fact, we observe that consistency for target losses is a special case of consistency for target statistics (\S~\ref{sec:consis-implies-indir}).
This observation suggests property elicitation (see \S~\ref{subsec:properties}) as a useful tool to study general lower bounds.

As definitions of consistency are relatively intractable to apply directly, the literature often focuses on a weaker condition called calibration, which only applies when given a discrete target loss, e.g. Quadrants 1 and 3. 
Particularly,~\citet{zhang2004statistical, lin2004note,bartlett2006convexity,tewari2007consistency,ramaswamy2016convex} show the equivalence of consistency and calibration in Quadrant 1, where one is given a target statistic and discrete prediction set.
We discuss the additional relationship of elicitation and calibration in Appendix~\ref{app:calibration}, and derive Theorem~\ref{thm:consistent-implies-indir-elic} via calibration.

\begin{table}
	\begin{center}
	\begin{tabular}{p{12ex}|l|l|} 
		& \emph{Target loss}  & \emph{Target statistic}\\ \hline
		\emph{Discrete} & \textbf{Q1:} Classification  & \textbf{Q2:} Risk-averse classification\\ 
		{\em prediction} & & (Appendix~\ref{app:omitted-examples})\\ \hline 
		\emph{Continuous } & \textbf{Q3:} Least-squares regression & \textbf{Q4:} Variance estimation\\ 
		\emph{estimation}& & \\
		\hline
	\end{tabular}
\caption{The four quadrants of problem types, with an example of interest for each.}
\label{tab:quadrants}
\end{center}
\end{table}

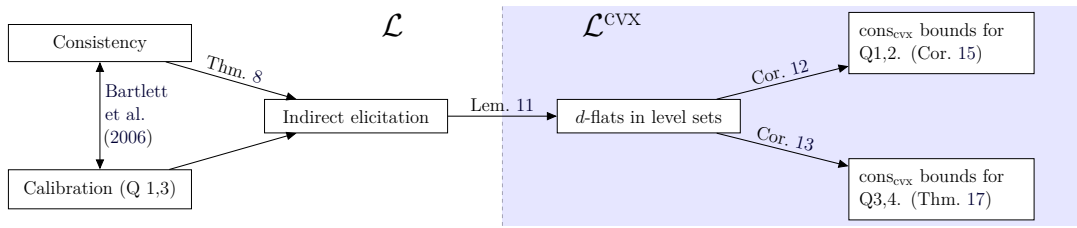
\begin{figure}
\begin{center}
\resizebox{0.95\linewidth}{!}{
\begin{tikzpicture}

\tikzset{vertex/.style = {draw,minimum width=13em, inner sep=8pt, fill = white, font=\Large}}
\tikzset{edge/.style = {->,> = triangle 45, thick}}
\tikzset{node/.style = {anchor=above, sloped}}
\tikzstyle{every node}=[font=\Large]

\path[fill=blue, fill opacity = 0.1] (0,-3) -- (0,3) -- (16,3) -- (16,-3) -- cycle;
\draw[dashed, opacity = 0.3] (0,-3) -- (0,-0.1);
\draw[dashed, opacity = 0.3] (0,3) -- (0,0.6);
\node[draw=none] at (3,2.5) {{\Huge$\Lcvx$}};
\node[draw=none] at (-3,2.5) {{\Huge$\L$}};

\node[vertex] (consistent) at  (-11,2) {Consistency};
\node[vertex] (calibrated) at  (-11,-2) {Calibration (Q 1,3)};
\node[vertex] (indir-elic) at  (-4,0) {Indirect elicitation};

\node[vertex] (d-flats) at  (4,0) {$d$-flats in level sets};
\node[vertex, text width=45mm] (q1-bounds) at (12,2) {$\conscvx$ bounds for Q1,2. (Cor.~\ref{cor:fsd-bound})};
\node[vertex, text width=45mm] (q4-bounds) at (12,-2) {$\conscvx$ bounds for Q3,4. (Thm.~\ref{thm:bayes-risk-lower-bound})};

\draw[edge, <->] (consistent) to node[right,text width=5mm]{\cite{bartlett2006convexity}} (calibrated);
\draw[edge] (consistent) to node[above, sloped]{Thm.~\ref{thm:consistent-implies-indir-elic}} (indir-elic);
\draw[edge] (calibrated) to (indir-elic);
\draw[edge] (indir-elic) to node[above, sloped]{Lem.~\ref{lem:convex-flats-inf-dim}} (d-flats);

\draw[edge] (d-flats) to node[above, sloped]{Cor.~\ref{cor:Pcodim-flat-elic-relint-prop}} (q4-bounds);
\draw[edge] (d-flats) to node[above, sloped]{Cor.~\ref{cor:Pcodim-flat-single-val-prop}} (q1-bounds);
\end{tikzpicture}
}
\end{center}
\caption{Flow and implications of our results. Compared to calibration, we suggest indirect elicitation as a simpler but almost-as-powerful necessary condition for consistency. In particular, we obtain a testable necessary condition, based on $d$-flats, for whether there exists a $d$-dimensional consistent convex surrogate. This condition recovers and strengthens existing calibration-based results.}
\label{fig:results-flow}
\end{figure}

\paragraph{Contributions}
First, we formalize a notion of consistency with respect to a target statistic (Definition~\ref{def:consistent-prop}) and show its relationship to consistency with respect to a target loss (Lemma~\ref{lem:consistent-loss-implies-prop}).
We then show indirect elicitation is a necessary condition for consistency (Theorem~\ref{thm:consistent-implies-indir-elic}).
With these tools in hand, we present a new framework for deriving lower bounds on the prediction dimension of consistent convex surrogates (Corollaries~\ref{cor:Pcodim-flat-single-val-prop} and~\ref{cor:Pcodim-flat-elic-relint-prop}) via indirect elicitation.
These bounds are the first to our knowledge that can be applied in all four quadrants.
Moreover, our framework can also give tighter bounds than previously existed in the literature.
We illustrate this sharpness with new bounds for well-studied problems such as abstain loss (\S~\ref{sec:finite-calib}) and variance, CVaR, and other measures of risk and uncertainty (\S~\ref{sec:contin-consis}).
See Figure~\ref{fig:results-flow} for a roadmap of our main results.

\section{Setting}\label{sec:related-work}

We consider supervised learning problems in the space $\X \times \Y$, for some \emph{feature space} $\X$ and a \emph{label space} $\Y$, with data drawn from a distribution $D$ over $\X \times \Y$.
The task is to produce a hypothesis $f: \X \to \R$, for some \emph{prediction space} $\R$, which may be different from $\Y$.
For example, in ranking problems, $\R$ may be all $|\Y|!$ permutations over the $|\Y|$ labels forming $\Y$.
As we focus on conditional distributions $p := D_x = \Pr[Y|X=x]$ over $\Y$ given some $x \in X$, we often abstract away $x$, working directly with a convex set of distributions over outcomes $\P \subseteq \simplex$.
We then write e.g.\ $\E_p L(\cdot,Y)$ to mean the expectation when $Y\sim p$.

If given, we use $\ell: \R \times \Y \to \reals$ to denote a \emph{target loss}, with predictions $r\in\R$.
Similarly, $L: \reals^d \times \Y \to \reals$ will typically denote a \emph{surrogate} loss, with surrogate predictions $u \in \reals^d$.
We write $\L_d $ for the set of $\mathcal{B}(\reals^d) \otimes \Y$-measurable and lower semi-continuous surrogates $L : \reals^d \times\Y \to \reals$ such that $\E_{Y \sim p} L(u,Y) < \infty$ for all $u \in \reals^d, p \in \P$, that are minimizable in that $\argmin_{u} \exploss{L}{u}{p}$ is nonempty for all $p\in\P$.
Moreover, $\Lcvx_d \subseteq \L_d$ is the set of convex (in $\reals^d$ for every $y \in \Y$) losses in $\L_d$.
Set $\L = \cup_{d \in \mathbb{N}} \L_d$, and $\Lcvx = \cup_{d \in \mathbb{N}} \Lcvx_d$.
A loss $\ell: \R\times \Y \to \reals$ is \emph{discrete} if $\R$ is a finite set.
For a given $p\in\P$, the (conditional) \emph{regret}, or excess risk, of a loss $L$ is given by $R_L(u,p) := \exploss{L}{u}{p} - \inf_{u^*} \exploss{L}{u^*}{p}$.
Typically, we notate finite report sets $\R$.

\subsection{Property elicitation}\label{subsec:properties}
Arising from the statistics and economics literature, property elicitation is similar to calibration, but only characterizes exact minimizers of a surrogate~\citep{savage1971elicitation,osband1985information-eliciting,lambert2008eliciting,lambert2009eliciting,lambert2018elicitation,frongillo2015vector-valued,frongillo2014general}.
Specifically, given a statistic or \emph{property} $\Gamma$ of interest, which maps a distribution $p \in \P \subseteq \simplex$ to the set of desired or correct predictions, the minimizers of $L$ should precisely coincide with $\Gamma$.
For example, squared loss $L(r,y) = (r-y)^2$ elicits the mean $\Gamma(p) = \E_p Y$.
For intuition, to relate to consistency, one can think of $p = \Pr[Y|X=x]$ as a conditional distribution, though the definition is also applied to point prediction settings.

\begin{definition}[Property, elicits]
	A \emph{property} is a set-valued function $\Gamma : \P \to 2^\R \setminus \{\emptyset\}$, which we denote $\Gamma: \P \toto \R$.
	A loss $L : \R \times \Y \to \reals$ \emph{elicits} the property $\Gamma$ if
	\begin{equation}
    \label{eq:elic}    
    \forall p \in \P, \;\; \Gamma(p) = \argmin_{u \in \R} \exploss{L}{u}{p}~.
	\end{equation}
\end{definition}
An example is the mean, $\Gamma(p) = \{\E_p Y\}$.
The \emph{level set} of $\Gamma$ at value $r\in\R$ is $\Gamma_r := \{p \in \P : r \in \Gamma(p)\}$.
We call a property $\Gamma: \P \toto \R$ \emph{discrete} if $\R$ is a finite set, as in Quadrants 1 and 2.
A property is \emph{single-valued} if $|\Gamma(p)|=1$ for all $p\in\P$, in which case we may write $\Gamma:\P\to\R$ and $\Gamma(p) \in \R$.
The mean is single-valued.
We define the \emph{range} of a property by $\range \Gamma = \bigcup_{p\in\P} \Gamma(p) \subseteq \R$.
When $L\in\L$, we use $\Gamma := \prop[\P]{L}$ to denote the unique property elicited by $L$ (for distributions in $\P$) from eq.~\eqref{eq:elic}. 
Typically, we denote the target property by $\gamma$, and the surrogate by $\Gamma$.

To relate property elicitation to consistency, we need to allow for a link function, which gives rise to the notion of \emph{indirect} elicitation.
For single-valued properties, this definition reduces to the natural requirement $\gamma = \psi \circ \Gamma$.
\begin{definition}[Indirect Elicitation]\label{def:indirectly-elicits}
	A surrogate loss and link $(L, \psi)$ \emph{indirectly elicit} a property $\gamma:\P \toto \R$ if $L$ elicits a property $\Gamma: \P \toto \reals^d$ such that for all $u \in \reals^d$, we have $\Gamma_u \subseteq \gamma_{\psi(u)}$.
	We say $L$ \emph{indirectly elicits} $\gamma$ if such a link $\psi$ exists.
\end{definition}

An important caveat to the above definitions is that, since $\Gamma = \prop[\P]{L}$ is nonempty everywhere, we must have $L\in\L$, meaning that $\exploss{L}{\cdot}{p}$ always achieves a minimum.
This restriction is also implicit in e.g.~\citep{agarwal2015consistent}.
While some popular surrogates such as logistic and exponential loss are not minimizable, these losses are still covered in Corollary~\ref{cor:Pcodim-flat-elic-relint-prop} and Theorem~\ref{thm:bayes-risk-lower-bound} as $\Gamma(p) \neq \emptyset$ when $p\in\P := \relint\simplex$; moreover, by thresholding $L'(u,y) = \max(L(u,y),\epsilon)$ for sufficiently small $\epsilon>0$ we can achieve $L'\in\L$ for both.
We expect that a generalization of property elicitation which allows for ``infinite'' predictions (e.g., along a prescribed ray), thereby ensuring a minimum is always achieved for convex losses, would allow us to lift the minimizable restriction entirely.

\subsection{Convex consistency dimension and elicitation complexity}\label{subsec:complexity}

Various works have studied the minimum prediction dimension $d$ needed in order to construct a consistent surrogate loss $L: \reals^d \times \Y \to \reals$, typically through proxies such as calibration~\citep{steinwart2008support,agarwal2015consistent,ramaswamy2016convex} and property elicitation~\citep{frongillo2015vector-valued,fissler2016higher,frongillo2020elicitation}.
In Quadrant 1, \citet{ramaswamy2016convex} introduce a special case of convex consistency dimension (Definition~\ref{def:cvx-consistency-dim}), which led to consistent convex surrogates for discrete prediction problems such as hierarchical classification~\citep{ramaswamy2015hierarchical} and classification with an abstain option~\citep{ramaswamy2018consistent}.

\begin{definition}[Convex Consistency Dimension]\label{def:cvx-consistency-dim}
  Given target loss $\ell:\R \times\Y \to \reals$ or property $\gamma: \P \toto \R$, its \emph{convex consistency dimension} $\conscvx(\cdot)$ is the minimum dimension $d$ such that $\exists L \in \Lcvx_d$ and link $\psi$ such that $(L,\psi)$ is consistent with respect to $\ell$ or $\gamma$.
\end{definition}
\emph{Consistency} is defined for a target loss in Definition~\ref{def:consistent-ell} and for a target property in Definition~\ref{def:consistent-prop}.

In the case of a target property $\gamma$, i.e. a statistic, \citet{lambert2008eliciting} similarly introduce the notion of \emph{elicitation complexity}, later generalized by \citet{frongillo2020elicitation}, which captures the lowest prediction dimension of a surrogate which indirectly elicits $\gamma$.
This notion is quite general as it includes continuous estimation settings and does not inherently depend on a target loss being given. 

\begin{definition}[Convex Elicitation Complexity]\label{def:cvx-elic-complex}
	Given a target property $\gamma$, the \emph{convex elicitation complexity} $\eliccvx(\gamma)$ is the minimum dimension $d$ such that there is a $L \in \Lcvx_d$ indirectly eliciting $\gamma$.
\end{definition}

\citet[Corollary 10]{agarwal2015consistent} provide a necessary condition for the direct convex elicitation of single-valued properties, yielding bounds on the dimensionality of level sets.
Moreover, \citet{finocchiaro2019embedding} study surrogate losses which \emph{embed} a discrete loss, which is a special case of indirect elicitation.
\citet{finocchiaro2020embedding} further introduce the notion of \emph{embedding dimension}, which is a lower bound on both convex elicitation complexity of discrete properties and convex consistency dimension of discrete losses and finite statistics.

\section{Consistency implies indirect elicitation}\label{sec:consis-implies-indir}

In this section, we connect consistency of any surrogate to an indirect elicitation requirement.
This will allow us to show indirect elicitation gives state-of-the-art lower bounds on the prediction dimension of consistent convex surrogates.

We start by formalizing consistency in two ways that generalize across our four quadrants.
First, given a target loss $\ell$, we say $L$ is consistent if optimizing $L$ and applying a link $\psi$ optimizes $\ell$ (Definition~\ref{def:consistent-ell}).
Second, given a target property $\gamma$, such as the $\alpha$-quantile, we say $L$ is consistent if optimizing $L$ implies approaching, in some sense, the correct statistic $\gamma(D_x)$ of the conditional distributions $D_x = \Pr[Y|X=x]$ (Definition~\ref{def:consistent-prop}).
We then observe that Definition~\ref{def:consistent-ell} is subsumed by Definition~\ref{def:consistent-prop}, and use this to show consistency implies $L$ indirectly elicits $\prop{\ell}$ or $\gamma$ respectively.

\begin{definition}[Consistent: loss]\label{def:consistent-ell}
  A loss $L \in \L$ and link $(L,\psi)$ are \emph{$\D$-consistent} for a set $\D$ of distributions over $\X \times \Y$ with respect to a target loss $\ell$ if, for all $D \in \D$ and all sequences of measurable hypothesis functions $\{f_m : \X \to \R\}$,
  \begin{align*}
    \E_D L(f_m(X), Y) \to \inf_f \E_D L(f(X), Y) &\implies \E_D \ell((\psi \circ f_m)(X), Y) \to \inf_f \E_D \ell((\psi \circ f)(X), Y)~.~
  \end{align*}
  For a given convex set $\P \subseteq \simplex$, we simply say $(L,\psi)$ is \emph{consistent} if it is $\D$-consistent for some $\D$ satisfying the following: for all $p \in \P$, there exists $D \in \D$ and $x \in \X$ such that $D$ has a point mass on $x$ and $p = D_x$.
\end{definition}

Instead of a target loss $\ell$, one may want to learn a target property, i.e. a conditional statistic such as the expected value, variance, or entropy.
In this case, following the tradition in the statistics literature on conditional estimation~\citep{gyorfi2006distribution,fan1998efficient,ruppert1997local},
we formalize consistency as converging to the correct conditional estimates of the property.
Convergence is measured by functions $\propdis(r, p)$ that formalize how close $r$ is to ``correct'' for conditional distribution $p$.
In particular we should have $\propdis(r,p) = 0 \iff r \in \gamma(p)$.

\begin{definition}[Consistent: property]\label{def:consistent-prop}
	Suppose we are given a loss $L \in \L$, link function $\psi: \reals^d \to \R$, and property $\gamma:\P \toto \R$.
	Moreover, let $\propdis : \R \times \P \to \reals_+$ be any function satisfying $\propdis(r,p) = 0 \iff r \in \gamma(p)$.
	We say $(L, \psi)$ is \emph{$(\propdis, \D)$-consistent with respect to} $\gamma$ if, for all $D \in \D$ and sequences of measurable functions $\{f_m: \X \to \R\}$, 
	\begin{equation}
    \E_{D} L(f_m(X), Y) \to \inf_f \E_{D} L( f(X), Y) \implies \E_X \propdis(\psi \circ f_m(X), D_X) \to 0~.~
  \end{equation}
  We simply say $(L,\psi)$ is \emph{$\propdis$-consistent} if it is $(\propdis,\D)$-consistent for some $\D$ satisfying the following: for all $p \in \P$, there exists $D \in \D$ and $x \in \X$ such that $D$ has a point mass on $x$ and $p = D_x$.
  Additionally, we say $(L,\psi)$ is \emph{consistent} if there is a $\propdis$ such that $(L,\psi)$ is $\propdis$-consistent.
\end{definition}

Typical definitions of consistency require $\D$ to be the set of all distributions over $\X \times \Y$, while our conditions are much weaker.
As the main focus of this paper is lower bounds on the prediction dimension, i.e., showing that surrogates of a certain prediction dimension cannot exist, these weaker conditions translate to stronger impossibility statements.

Given a target loss $\ell$, we can define a statistic $\gamma$, the property it elicits.
Intuitively, consistency of a surrogate $L$ with respect to $\ell$ and $\gamma$ are equivalent, i.e. in both cases estimates should converge to values that minimize $\ell$-loss.
We formalize this by letting $\propdis$ be the $\ell$-regret, yielding Lemma~\ref{lem:consistent-loss-implies-prop}, proven in Appendix~\ref{app:misc-omitted-proofs}.

\begin{restatable}[]{lemma}{consistentlossimpliesprop}\label{lem:consistent-loss-implies-prop}
	Let a convex $\P \subseteq \simplex$ be given.
Given a surrogate loss $L \in \L$, link $\psi$, and target loss $\ell$, set
$\mu(r,p) := R_\ell(r,p)$.
Then there is a $\D$ such that
$(L, \psi)$ is $\D$-consistent with respect to $\ell$ if and only if $(L,\psi)$ is $(\propdis, \D)$-consistent with respect to $\gamma := \prop{\ell}$. 
\end{restatable}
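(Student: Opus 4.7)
The plan is to reduce both directions of the biconditional to a single pointwise identity: for any measurable hypothesis $f$ and any $D \in \D$,
\begin{equation*}
\E_D \ell(f(X), Y) - \inf_{f'} \E_D \ell(f'(X), Y) \;=\; \E_X\, R_\ell(f(X), D_X) \;=\; \E_X\, \mu(f(X), D_X).
\end{equation*}
Once this identity is established, applying it along the link-composed sequence $\psi \circ f_m$ makes the convergence condition on the left side of the implication in Definition~\ref{def:consistent-ell} syntactically identical to the convergence condition on the right side of the implication in Definition~\ref{def:consistent-prop}, while the hypothesis on the surrogate ($\E_D L(f_m(X),Y) \to \inf_f \E_D L(f(X),Y)$) is shared between the two definitions. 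Hence both implications hold for the same collection $\D$ or both fail.

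First I would verify that $\mu$ is a valid proximity function for Definition~\ref{def:consistent-prop}: by definition of $\gamma := \prop{\ell}$ we have $R_\ell(r,p) = 0$ iff $r \in \argmin_{r' \in \R} \E_p \ell(r', Y) = \gamma(p)$, and $R_\ell(r,p) \geq 0$ by construction, so $\mu(r,p) = 0 \iff r \in \gamma(p)$ as required. Next I would establish the pointwise identity. The conditioning step $\E_D \ell(f(X),Y) = \E_X \E_{D_X} \ell(f(X),Y)$ follows from the tower property, and the infimum exchange $\inf_{f'} \E_D \ell(f'(X),Y) = \E_X \inf_{r \in \R} \E_{D_X} \ell(r,Y)$ is the standard pointwise-optimization identity for supervised loss. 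Subtracting yields the claimed expression in terms of $R_\ell(f(X), D_X)$, which is by definition $\mu(f(X), D_X)$.

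With this identity in hand, the equivalence is immediate: substituting $f \leftarrow \psi \circ f_m$ rewrites $\E_D \ell((\psi \circ f_m)(X), Y) \to \inf_f \E_D \ell(f(X), Y)$ as $\E_X \mu((\psi \circ f_m)(X), D_X) \to 0$. Thus for any fixed $\D$, the implication defining $\D$-consistency of $(L,\psi)$ with respect to $\ell$ is the same statement as the implication defining $(\mu,\D)$-consistency of $(L,\psi)$ with respect to $\gamma$, proving the biconditional with the very same $\D$ on both sides.

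The main obstacle is the infimum-exchange step, which is a measurability issue rather than a conceptual one: one must ensure that $x \mapsto \inf_{r \in \R} \E_{D_X} \ell(r,Y)$ is measurable and is in fact attained (at least approximately) by a measurable hypothesis so that $\E_X$ of the pointwise infimum equals the infimum over $f$. This is standard under mild regularity (e.g., a measurable selection theorem applied to the argmin correspondence, or a simple $\epsilon$-net argument over a separable $\R$), and is consistent with the measurability and minimizability assumptions already in force for $\L$; I would invoke this as a standard fact and defer the details to the appendix.
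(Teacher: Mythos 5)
Your proposal is correct and takes essentially the same route as the paper's own proof. Both arguments (i) verify that $\mu := R_\ell$ is a valid proximity function (i.e., $\mu(r,p)=0 \iff r \in \gamma(p)$, immediate from $\gamma = \prop{\ell}$), (ii) use conditioning to rewrite $\E_D \ell(\psi\circ f_m(X),Y) - \inf_f \E_D \ell(\psi\circ f(X),Y)$ as $\E_X R_\ell(\psi\circ f_m(X),D_X) = \E_X\,\mu(\psi\circ f_m(X),D_X)$, and (iii) observe that the surrogate-side hypothesis is shared, so the two consistency implications coincide for the same $\D$.

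The only notable difference is expository. You isolate the infimum-exchange $\inf_{f'} \E_D \ell(f'(X),Y) = \E_X \inf_{r\in\R} \E_{D_X}\ell(r,Y)$ as a separate measurable-selection step, whereas the paper compresses the entire chain into one appeal to Fubini--Tonelli. Your version is arguably the more careful bookkeeping: Fubini--Tonelli justifies the conditioning $\E_D = \E_X\E_{D_X}$ but not the exchange of infimum and expectation. Note also that both your write-up and the paper's silently identify $\inf_f \E_D\ell((\psi\circ f)(X),Y)$ (the infimum appearing in Definition~\ref{def:consistent-ell}, restricted to $\psi$-composed hypotheses) with $\E_X \inf_{r\in\R}\E_{D_X}\ell(r,Y)$ (the unrestricted Bayes $\ell$-risk implicit in $R_\ell$); this identification is what makes the displayed identity hold, and neither proof flags when it can fail. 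Since the paper's proof shares this tacit step, this is not a gap specific to your argument.
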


Because each target loss in $\L$ elicits some property, but not all target properties can be elicited by a loss (e.g. the variance), consistency with respect to a property is the strictly broader notion.
This points to indirect elicitation as a natural necessary condition for consistency, as formalized in Theorem \ref{thm:consistent-implies-indir-elic}.

\begin{theorem}\label{thm:consistent-implies-indir-elic}
  For a surrogate $L \in \L$, if the pair $(L, \psi)$ is consistent with respect to a property $\gamma: \P \toto \R$ or a loss $\ell$ eliciting $\gamma$, then $(L, \psi)$ indirectly elicits $\gamma$.
\end{theorem}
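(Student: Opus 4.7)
The plan is to show the contrapositive: if $(L,\psi)$ fails to indirectly elicit $\gamma$, then it fails to be consistent. First, by Lemma~\ref{lem:consistent-loss-implies-prop}, the ``target loss'' case reduces to the ``target property'' case with $\gamma := \prop{\ell}$ and $\propdis := R_\ell$, so I would reduce to proving the statement only for consistency with respect to a property $\gamma$. Since $L \in \L$, it elicits a unique property $\Gamma := \prop{L}$, and indirect elicitation of $\gamma$ amounts to the containment $\Gamma_u \subseteq \gamma_{\psi(u)}$ for every $u \in \reals^d$.

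The core step is to translate a violation of this containment into a violation of consistency via a constant hypothesis. Suppose there exist $u^* \in \reals^d$ and $p^* \in \P$ with $p^* \in \Gamma_{u^*}$ but $p^* \notin \gamma_{\psi(u^*)}$; equivalently, $u^* \in \argmin_{u} \exploss{L}{u}{p^*}$ yet $\propdis(\psi(u^*), p^*) > 0$. By the $\propdis$-consistency assumption, there is some $\D$ and some $D \in \D$ with a point mass on a feature $x$ satisfying $D_x = p^*$. Consider the constant sequence $f_m \equiv u^*$. Because $D$ is concentrated at $x$, for any hypothesis $f$ we have $\E_D L(f(X),Y) = \exploss{L}{f(x)}{p^*} \geq \inf_u \exploss{L}{u}{p^*} = \exploss{L}{u^*}{p^*}$, so the infimum over hypotheses is achieved by $f_m$, and the premise $\E_D L(f_m(X),Y) \to \inf_f \E_D L(f(X),Y)$ holds trivially.

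By consistency, we must then have $\E_X \propdis(\psi(f_m(X)), D_X) \to 0$. But this sequence is the constant $\propdis(\psi(u^*), p^*) > 0$, a contradiction. Hence $\Gamma_u \subseteq \gamma_{\psi(u)}$ for all $u$, and $(L,\psi)$ indirectly elicits $\gamma$, as required.

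I do not expect a serious obstacle: the definitions of $\D$-consistency were deliberately written so that every $p \in \P$ is realizable as a conditional $D_x$ at a point-mass feature, which is exactly what lets a single bad distribution $p^*$ be probed by a constant hypothesis. The only thing to double-check carefully is that $u^*$ being a \emph{minimizer} (guaranteed by $L \in \L$ and $p^* \in \Gamma_{u^*}$) really does make the constant sequence attain $\inf_f \E_D L(f(X),Y)$; the point-mass structure of $D$ and the definition of $\Gamma = \prop{L}$ give this immediately.
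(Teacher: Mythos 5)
Your proposal is correct and matches the paper's own argument essentially step for step: reduce to the property case via Lemma~\ref{lem:consistent-loss-implies-prop}, take the contrapositive, use the point-mass distribution $D$ with $D_x = p^*$ guaranteed by the definition of consistency, and probe with a constant hypothesis sequence at a minimizer $u^* \in \Gamma(p^*)$ whose link output lies outside $\gamma(p^*)$. No further comment is needed.
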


\begin{proof}
  By Lemma~\ref{lem:consistent-loss-implies-prop}, it suffices to show the result for consistency with respect to a property $\gamma$, setting $\gamma := \prop{\ell}$ if $\ell$ is given instead.
  We show the contrapositive; suppose $(L, \psi)$ does not indirectly elicit $\gamma$, meaning we have some $p \in \P$ so that $u \in \Gamma(p)$ but $\psi(u) \not \in \gamma(p)$, where $\Gamma := \prop{L}$.
  Observe that we use the fact $\Gamma(p) \neq \emptyset$.
  By definition, if we had consistency, there must be some distribution $D$ on $\X\times\Y$ with a point mass on some $x\in\X$ and $D_x = p$.
  Consider a constant sequence $\{f_m\}$ with $f_m = f'$ such that $f'(x) = u$,
  so that $\E_D L(f_m(X), Y) = \E_{D_x} L(f_m(x),Y) = \E_p L(u,Y)$.
  Since $u \in \Gamma(p)$, we have $\E_p L(u,Y) = \inf_f \E_{D_x} L(f(x),Y) = \inf_f \E_D L(f(X),Y)$.
  In particular, we have $\E_D L(f_m(X), Y) \to \inf_f \E_D L(f(X),Y)$.
  However, we have $\E_X \propdis(\psi \circ f_m(X), D_X) = \propdis(f_m(x), p) = \propdis(\psi(u), p) \neq 0$, since $\psi(u) \not \in \gamma(p)$.
  Therefore $(L, \psi)$ is not consistent with respect to $\gamma$ (Definition~\ref{def:consistent-prop}).
\end{proof}

This result allows us to state elicitation complexity as a lower bound for convex consistency dimension.

\begin{corollary}\label{cor:elic-lb-consis-dim}
	Given a property $\gamma : \P \toto \R$ or loss $\ell:\R \times \Y \to \reals$ eliciting $\gamma$, we have $\eliccvx(\gamma) \leq \conscvx(\gamma) = \conscvx(\ell)$.
\end{corollary}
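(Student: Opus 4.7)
The plan is to prove Corollary~\ref{cor:elic-lb-consis-dim} by treating the two claims separately.

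For the inequality $\eliccvx(\gamma)\le\conscvx(\gamma)$: I would set $d=\conscvx(\gamma)$ and, by Definition~\ref{def:cvx-consistency-dim}, fix a witness $(L,\psi)$ with $L\in\Lcvx_d$ that is consistent with respect to $\gamma$. Since $L\in\Lcvx_d\subseteq\L$, Theorem~\ref{thm:consistent-implies-indir-elic} applies to the pair $(L,\psi)$ and yields that $(L,\psi)$ indirectly elicits $\gamma$. Reading Definition~\ref{def:cvx-elic-complex} then immediately produces $\eliccvx(\gamma)\le d=\conscvx(\gamma)$. This direction is essentially a one-line deduction from the theorem.

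For the equality $\conscvx(\gamma)=\conscvx(\ell)$ when $\gamma:=\prop{\ell}$: I would appeal to Lemma~\ref{lem:consistent-loss-implies-prop} with the canonical choice $\propdis:=R_\ell$. The lemma asserts, for any fixed pair $(L,\psi)$, that $\D$-consistency with respect to $\ell$ is equivalent to $(R_\ell,\D)$-consistency with respect to $\gamma$, for the same $\D$ and without altering $L$, its dimension $d$, or the link $\psi$. Since $R_\ell(r,p)=0\iff r\in\gamma(p)$, the function $R_\ell$ qualifies as a $\propdis$ in Definition~\ref{def:consistent-prop}, so $(R_\ell,\D)$-consistency is a special case of consistency with respect to $\gamma$. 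A dimension-$d$ witness on either side therefore transfers to a dimension-$d$ witness on the other, so passing to the infimum over $d$ yields the claimed equality.

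The main obstacle, I expect, is making the direction $\conscvx(\ell)\le\conscvx(\gamma)$ fully watertight: Definition~\ref{def:consistent-prop} only requires \emph{some} $\propdis$ to witness consistency with respect to $\gamma$, whereas Definition~\ref{def:consistent-ell} implicitly pins $\propdis=R_\ell$ via the excess $\ell$-risk. The plan is to sidestep this by always choosing the canonical $\propdis=R_\ell$ when invoking Lemma~\ref{lem:consistent-loss-implies-prop}, so that its equivalence applies verbatim and the prediction dimension is preserved in both translations. Once these two dimension-preserving transfers are in hand, the corollary follows by taking the infimum over $d$ in each of the relevant witness sets, producing the full chain $\eliccvx(\gamma)\le\conscvx(\gamma)=\conscvx(\ell)$.
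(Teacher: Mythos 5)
Your first paragraph is exactly the paper's (implicit) argument: the corollary is stated without proof precisely because $\eliccvx(\gamma)\le\conscvx(\gamma)$ is a one-line consequence of Theorem~\ref{thm:consistent-implies-indir-elic} together with Definitions~\ref{def:cvx-consistency-dim} and~\ref{def:cvx-elic-complex}, and since the theorem covers consistency with respect to either $\gamma$ or a loss $\ell$ eliciting $\gamma$, you also get $\eliccvx(\gamma)\le\conscvx(\ell)$ directly, which is the only part of the corollary the paper uses downstream. That portion of your proposal is complete.

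On the equality $\conscvx(\gamma)=\conscvx(\ell)$: the direction $\conscvx(\gamma)\le\conscvx(\ell)$ is fine, since a witness that is $\D$-consistent for $\ell$ is, by Lemma~\ref{lem:consistent-loss-implies-prop}, $(R_\ell,\D)$-consistent for $\gamma$, and $R_\ell$ is an admissible $\propdis$ in Definition~\ref{def:consistent-prop}. But you correctly diagnose the issue in the reverse direction and then do not actually resolve it: a dimension-$d$ witness for $\conscvx(\gamma)$ is only guaranteed to be $\propdis$-consistent for \emph{some} $\propdis$ of its own, and you cannot ``always choose the canonical $\propdis=R_\ell$'' because the $\propdis$ is attached to the witness, not to you. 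Lemma~\ref{lem:consistent-loss-implies-prop} says nothing about witnesses whose $\propdis$ differs from $R_\ell$, and $\E_X\propdis(\cdot,D_X)\to 0$ for one nonnegative function vanishing exactly on $\gamma(p)$ does not in general force $\E_X R_\ell(\cdot,D_X)\to 0$ for another. So as written your argument only establishes $\eliccvx(\gamma)\le\conscvx(\gamma)\le\conscvx(\ell)$; closing the remaining direction requires either an additional argument relating arbitrary admissible $\propdis$ to $R_\ell$, or reading the equality as holding under the convention that consistency with respect to $\gamma=\prop{\ell}$ is taken with $\propdis=R_\ell$ (which is how the paper tacitly uses it). This is a subtlety inherited from Definition~\ref{def:consistent-prop} rather than a defect in your strategy, and it does not affect the inequality that the rest of the paper relies on.
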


\section{Prediction Dimension of Consistent Convex Surrogates}\label{sec:char-convex}
We now turn to the question of bounding the prediction dimension of a consistent convex surrogate.
From Theorem \ref{thm:consistent-implies-indir-elic}, given a target property $\gamma$ or loss $\ell$ with $\gamma = \prop{\ell}$, this task reduces to lower bounding the prediction dimension of a convex surrogate indirectly eliciting $\gamma$.
We now explore two tools, Corollaries~\ref{cor:Pcodim-flat-single-val-prop} and~\ref{cor:Pcodim-flat-elic-relint-prop}, for proving such convex elicitation lower bounds.
The key idea, crystallized from the proofs of \citet[Theorem 16]{ramaswamy2016convex} and \citet[Theorem~9]{agarwal2015consistent}, is to consider a particular distribution~$p$ and surrogate prediction $u \in \reals^d$ with is optimal for $p$.
Theorem~\ref{lem:convex-flats-inf-dim} will show that if $d$ is small, then the level set $\{p \in \P : u \in \argmin_{u'} \exploss{L}{u'}{p}\}$ must be large; in fact, it must roughly contain a high-dimensional \emph{flat}.
By definition of indirect elicitation, there is some level set $\gamma_r$ (where $u$ is linked to $r$) containing this flat as well.
The use of this result is to leverage the contrapositive: if $\gamma$ has a level set intricate enough to not contain any high-dimensional flats, then $\gamma$ cannot have a low-dimensional consistent surrogate.

\begin{definition}[Flat]\label{def:flat-general}
  For $d\in\mathbb N$, a \emph{$d$-flat}, or simply \emph{flat}, is a nonempty set $F = \zeros{W} := \{q \in \P : \E_q W = \vec 0\}$ for some measurable $W:\Y \to \reals^d$.
\end{definition}

We state our elicitation lower bounds in Corollaries~\ref{cor:Pcodim-flat-single-val-prop} and~\ref{cor:Pcodim-flat-elic-relint-prop}, which when combined with Theorem \ref{thm:consistent-implies-indir-elic}, yield consistency bounds.
A similar result is \citet[Theorem 9]{agarwal2015consistent}, which bounds the dimension of level sets of a single-valued $\prop{L}$.
Corollaries~\ref{cor:Pcodim-flat-single-val-prop} and~\ref{cor:Pcodim-flat-elic-relint-prop} instead bound the dimension of flats contained in the level sets, an additional power which we leverage in our examples.

\begin{lemma}
  \label{lem:convex-flats-inf-dim}
  Let $\Gamma:\P \toto \reals^d$ be (directly) elicited by $L \in \Lcvx_d$ for some $d\in\mathbb{N}$.
  Let $\Y$ be either a finite set, or $\Y = \reals$, in which case we assume each $p\in\P$ admits a Lebesgue density supported on the same set for all $p\in\P$.
  \footnote{This assumption is largely for technical convenience, to ensure that $\V_{u,p}$ does not depend on $p$.
    Any such assumption would suffice, and we suspect even that condition can be relaxed.}
  For all $u\in\range\Gamma$ and $p\in\Gamma_u$, there is some $V_{u,p}:\Y\to\reals^d$ such that $p \in \zeros{V_{u,p}} \subseteq \Gamma_u$.
\end{lemma}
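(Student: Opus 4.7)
The plan is to construct $V_{u,p}$ as a measurable selection from the subdifferential of $L(\cdot, y)$ at $u$ whose $p$-expectation vanishes, and then use the subgradient inequality to conclude that every $q \in \zeros{V_{u,p}}$ admits $u$ as a minimizer of $\exploss{L}{\cdot}{q}$. Writing $F_q(u') := \exploss{L}{u'}{q}$, the convexity of $L$ in its first argument makes each $F_q$ convex on $\reals^d$, and by hypothesis $u \in \argmin F_p$, so $\vec 0 \in \partial F_p(u)$.

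The key step is producing a measurable $V_{u,p} : \Y \to \reals^d$ satisfying $V_{u,p}(y) \in \partial_{u'} L(u, y)$ for every $y$ in the common support, together with $\E_p V_{u,p}(Y) = \vec 0$. When $\Y$ is finite, Moreau--Rockafellar gives $\partial F_p(u) = \sum_{y \in \supp p} p(y)\,\partial_{u'} L(u, y)$ as a Minkowski sum (the constraint qualification is automatic since every $L(\cdot, y)$ is finite at $u$), so $\vec 0 \in \partial F_p(u)$ directly yields subgradients $v_y \in \partial_{u'} L(u, y)$ with $\sum_y p(y) v_y = \vec 0$; I set $V_{u,p}(y) = v_y$, choosing an arbitrary subgradient at any $y \notin \supp p$. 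In the continuous case, I invoke Rockafellar's theorem on subdifferentials of integral functionals, viewing $y \mapsto L(\cdot, y)$ as a convex normal integrand: this gives $\partial F_p(u) = \int \partial_{u'} L(u, y)\,dp(y)$ in the Aumann-integral/selection sense, so $\vec 0 \in \partial F_p(u)$ yields a measurable selection with the required expectation. The common-support hypothesis on $\P$ is precisely what lets a single such $V_{u,p}$ serve all $q \in \P$ rather than depend on $q$.

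Given such $V_{u,p}$, both inclusions are immediate. Since $\E_p V_{u,p} = \vec 0$ by construction, $p \in \zeros{V_{u,p}}$. For the reverse direction $\zeros{V_{u,p}} \subseteq \Gamma_u$, apply the subgradient inequality: for every $y$ and $u' \in \reals^d$,
\[ L(u', y) \geq L(u, y) + V_{u,p}(y)^\top (u' - u)~. \]
Taking expectation under any $q \in \zeros{V_{u,p}}$ and using $\E_q V_{u,p}(Y) = \vec 0$ gives $F_q(u') \geq F_q(u)$ for all $u'$, whence $u \in \argmin F_q = \Gamma(q)$, i.e., $q \in \Gamma_u$.

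I expect the measurable-selection step to be the main obstacle. The finite case reduces to clean convex geometry, but the continuous case requires the non-trivial interchange of subdifferentiation and expectation for normal convex integrands, and invoking this cleanly is why the lemma's technical hypothesis (common Lebesgue support for all $p \in \P$) is imposed; without it, one would need a separate $V_{u,p}$ tailored to the support of each $q$, breaking the argument above.
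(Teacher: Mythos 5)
Your proposal is correct and follows essentially the same route as the paper's proof: identify $\vec 0 \in \partial\,\E_p L(u,Y)$, extract a subgradient selection $V_{u,p}$ with vanishing $p$-expectation via the Minkowski-sum decomposition (finite $\Y$) or an interchange-of-subdifferentiation-and-integration theorem (the paper cites Ioffe where you cite Rockafellar's normal-integrand result, but these play the identical role), and use the common-support assumption so the same selection is a valid $q$-a.s.\ subgradient for every $q \in \P$. Your final step is marginally more self-contained than the paper's---you integrate the pointwise subgradient inequality against $q$ directly rather than re-invoking the interchange theorem to conclude $\vec 0 \in \partial\,\E_q L(u,Y)$---but this is a cosmetic difference, not a different argument.
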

\begin{proof}
  As $L$ is convex and elicits $\Gamma$, we have $u \in \Gamma(p) \iff \vec 0 \in \partial \exploss{L}{u}{p}$. 
  We proceed in two cases, depending on $|\Y|$.
  
  \emph{Finite $\Y$: }
  If $\Y$ is finite, this is additionally equivalent to $\vec 0 \in \oplus_y p_y \partial L(u,y)$, where $\oplus$ denotes the Minkowski sum~\citep[Theorem 4.1.1]{hiriart2012fundamentals}.\footnote{$\partial$ represents the subdifferential $\partial f(x) = \{z : f(x') - f(x) \geq \inprod{z}{x'-x}\; \forall x' \}$.}
  Expanding, we have $\oplus_y p_y \partial L(u,y) = \{ \sum_{y\in\Y} p_y x_y \mid x_y \in \partial L(u,y) \; \forall y\in\Y\}$, and thus $W p = \sum_y p_y x_y = \vec 0$ where $W = [x_1, \ldots, x_n] \in \reals^{d\times n}$; cf.~\cite[$\mathbf{A}^m$ in Theorem 16]{ramaswamy2016convex}.
  Let $V_{u,p} : \Y \to \reals^d, y \mapsto W_y$ be the function encoding the columns of $W$.
  Observe that $\E_p V_{u,p} = \vec 0$.				
  
  \emph{$\Y=\reals$: }
  Any $L \in \Lcvx_d$ satisfies the assumptions of~\cite{ioffe1969minimization}, so we may interchange subdifferentiation and expectation.
  Specifically, letting $\V_{u,p} = \{V:\Y\to\reals^d \mid V \text{ measurable}, V(y) \in \partial L(u,y) \text{ $p$-a.s.}\}$, we have
  $\partial \E_p L(u,Y) = \{\int V(y)dp(y) \mid V\in\V_{u,p}\}$.
  As $\vec 0 \in \partial \E_p L(u,Y)$, in particular, there is some $V_{u,p} \in \V_{u,p}$ such that $\E_p V_{u,p} = 0$.
  For any $q\in\P$, as by assumption $q$ is supported on the same set as $p$, we have $V_{u,p}(y)\in\partial L(u,y)$ $q$-a.s., so that $V_{u,p}\in\V_{u,q}$.
  Thus, $\E_q V_{u,p} = 0$ implies $0\in\partial\E_q L(u,Y)$ by the above.
  
  In both cases, we take the flat $F := \zeros{V_{u,p}}$, and have $p \in F$ by construction.
  To see $F \subseteq \Gamma_u$, from the chain of equivalences above, we have for any $q\in\P$ that $q \in \zeros{V_{u,p}} \implies \vec 0 \in \partial \E_q L(u,Y) \implies u \in \Gamma(q) \implies q \in \Gamma_u$.
\end{proof}

Knowing indirect elicitation implies the existence of such a flat, we now apply  Theorem~\ref{thm:consistent-implies-indir-elic} and Lemma~\ref{lem:convex-flats-inf-dim} to construct lower bounds on convex consistency dimension.

\begin{corollary}\label{cor:Pcodim-flat-single-val-prop}
  Let target property $\gamma:\P \toto \R$ and $d\in\mathbb N$ be given.
  Let $\Y$ be either a finite set, or $\Y = \reals$, in which case we assume each $p\in\P$ admits a Lebesgue density supported on the same set for all $p\in\P$.
  Let $p \in \P$ with $|\gamma(p)| = 1$, and take $\gamma(p) = \{r\}$.
If there is no $d$-flat $F$ with $p \in F \subseteq \gamma_r$, then $\conscvx(\gamma) \geq \eliccvx(\gamma) \geq d + 1$.
\end{corollary}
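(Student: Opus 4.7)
The plan is to combine Corollary~\ref{cor:elic-lb-consis-dim}, which immediately gives $\conscvx(\gamma)\ge\eliccvx(\gamma)$, with a contrapositive application of Lemma~\ref{lem:convex-flats-inf-dim} to obtain $\eliccvx(\gamma)\ge d+1$. Concretely, I would assume for contradiction that $\eliccvx(\gamma)\le d$, so there exist $d'\le d$, a convex surrogate $L\in\Lcvx_{d'}$, and a link $\psi:\reals^{d'}\to\R$ such that $(L,\psi)$ indirectly elicits $\gamma$, and then exhibit a $d$-flat containing $p$ and sitting inside $\gamma_r$, contradicting the hypothesis.

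The key step is to pick the right surrogate prediction $u$ at $p$ and pin down its link value. Let $\Gamma:=\prop{L}$; since $L\in\L$, the set $\Gamma(p)$ is nonempty, so I can choose any $u\in\Gamma(p)$, giving $p\in\Gamma_u$ and $u\in\range\Gamma$. Indirect elicitation yields $\Gamma_u\subseteq\gamma_{\psi(u)}$, so in particular $p\in\gamma_{\psi(u)}$, i.e., $\psi(u)\in\gamma(p)=\{r\}$. The single-valuedness of $\gamma$ at $p$ is exactly what forces $\psi(u)=r$, and hence $\Gamma_u\subseteq\gamma_r$.

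Now Lemma~\ref{lem:convex-flats-inf-dim}, applied to $u\in\range\Gamma$ and $p\in\Gamma_u$ for the convex loss $L\in\Lcvx_{d'}$, produces a measurable $V_{u,p}:\Y\to\reals^{d'}$ with $p\in\zeros{V_{u,p}}\subseteq\Gamma_u\subseteq\gamma_r$. If $d'<d$, I would extend $V_{u,p}$ to $V':\Y\to\reals^d$ by appending $d-d'$ identically-zero coordinates; the added constraints are vacuous, so $\zeros{V'}=\zeros{V_{u,p}}$, and this set is a $d$-flat in the sense of Definition~\ref{def:flat-general}. It contains $p$ and lies in $\gamma_r$, contradicting the hypothesis and yielding $\eliccvx(\gamma)\ge d+1$.

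The main obstacle here is conceptual rather than technical: given Lemma~\ref{lem:convex-flats-inf-dim}, there is no hard computation left. The two subtleties to get right are (i) using the single-valuedness of $\gamma$ at $p$ to pin down $\psi(u)=r$, since otherwise one only obtains a flat inside $\gamma_{\psi(u)}$ for some unspecified link value, and (ii) the padding argument that lets a $d'$-flat be viewed as a $d$-flat, so that the assumption ``no $d$-flat in $\gamma_r$ through $p$'' rules out convex surrogates of \emph{every} prediction dimension $d'\le d$, not merely the extremal case $d'=d$.
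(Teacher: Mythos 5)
Your proposal is correct and follows essentially the same route as the paper's proof: choose $u\in\Gamma(p)$, use single-valuedness of $\gamma$ at $p$ to force $\psi(u)=r$, invoke Lemma~\ref{lem:convex-flats-inf-dim} to produce a flat $F$ with $p\in F\subseteq\Gamma_u\subseteq\gamma_r$, and conclude via the contrapositive together with Corollary~\ref{cor:elic-lb-consis-dim}. Your explicit zero-padding step showing that a $d'$-flat with $d'\le d$ is also a $d$-flat is a detail the paper leaves implicit, but it is a correct and worthwhile clarification rather than a divergence in approach.
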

\begin{proof}
	Let $(L, \psi)$ indirectly elicit $\gamma$, where $L\in\Lcvx_d$, and let $\Gamma = \prop{L}$.
	As $\Gamma$ is non-empty, there is some $u \in \Gamma(p)$.
	Since $\gamma$ is single-valued at $p$, we have $r = \psi(u)$; by Lemma~\ref{lem:convex-flats-inf-dim}, we know there is a $d$-flat $F = \zeros{V_{u,p}}$ so that $p \in F \subseteq \Gamma_u$.
	By definition of indirect elicitation, we additionally have $\Gamma_u \subseteq \gamma_r$.
	Thus, we have $p \in F \subseteq \gamma_r$.
    If no flat $F$ satisfies the above conditions, then no $L\in\Lcvx_d$ indirectly elicits $\gamma$, so $\eliccvx(\gamma) \geq d+1$, and recall $\conscvx(\gamma) \geq \eliccvx(\gamma)$ by Corollary~\ref{cor:elic-lb-consis-dim}.
\end{proof}

\begin{corollary}\label{cor:Pcodim-flat-elic-relint-prop} 
  Let an elicitable target property $\gamma:\P \toto \R$ be given, where $\P\subseteq\simplex$ is defined over a finite set of outcomes $\Y$, and let $d\in\mathbb N$.
Let $p \in \relint{\P}$.
If there is no $d$-flat $F$ with $p \in F \subseteq \gamma_r$, then $\conscvx(\gamma) \geq \eliccvx(\gamma) \geq d + 1$.
\end{corollary}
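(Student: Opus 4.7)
The plan is to prove the contrapositive, following the same template as Corollary~\ref{cor:Pcodim-flat-single-val-prop} but dropping the single-valuedness hypothesis at $p$. Suppose some $L \in \Lcvx_d$ with link $\psi$ indirectly elicits $\gamma$, and let $\Gamma = \prop{L}$. I will exhibit a $d$-flat $F$ with $p \in F \subseteq \gamma_r$ for some $r$, contradicting the hypothesis.

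First, since $L \in \L$, the set $\Gamma(p)$ is nonempty, so I can pick any $u \in \Gamma(p)$, which means $p \in \Gamma_u$. By the definition of indirect elicitation, $\Gamma_u \subseteq \gamma_{\psi(u)}$, and combined with $p \in \Gamma_u$ this forces $\psi(u) \in \gamma(p)$. Setting $r := \psi(u)$ thus selects a value in $\gamma(p)$ automatically, without assuming $\gamma$ is single-valued at $p$. This is the one genuinely new move compared to Corollary~\ref{cor:Pcodim-flat-single-val-prop}: the appeal to single-valuedness there is replaced here by the observation that indirect elicitation itself pins down the correct link image, so long as $\Gamma(p)$ is nonempty.

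Second, I would apply the finite-$\Y$ case of Lemma~\ref{lem:convex-flats-inf-dim} to $\Gamma$, $u$, and $p$ to obtain some $V_{u,p}:\Y \to \reals^d$ with $p \in \zeros{V_{u,p}} \subseteq \Gamma_u$. Taking $F := \zeros{V_{u,p}}$ then yields the chain $p \in F \subseteq \Gamma_u \subseteq \gamma_r$, producing the forbidden $d$-flat and completing the contradiction. The bound $\conscvx(\gamma) \geq \eliccvx(\gamma) \geq d+1$ follows from Corollary~\ref{cor:elic-lb-consis-dim}.

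The relative interior hypothesis plays a purely technical role: it ensures the Minkowski-sum decomposition $\partial \E_p L(u,Y) = \oplus_y p_y \partial L(u,y)$ inside Lemma~\ref{lem:convex-flats-inf-dim} produces a $V_{u,p}$ whose zero set is a genuine $d$-flat meeting every $q\in\P$, rather than one that is silent on outcomes outside $\supp(p)$. I do not anticipate any serious obstacle; the main care needed in the write-up is to be explicit about how $r = \psi(u) \in \gamma(p)$ is obtained ``for free'' from indirect elicitation, and that the elicitability of $\gamma$ is invoked only implicitly, through the fact that we are comparing $\gamma$ to an elicited surrogate property $\Gamma = \prop{L}$.
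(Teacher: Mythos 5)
There is a genuine gap, and it sits exactly where you wave your hand about elicitability and the relative interior. In the statement, $r$ is a given element of $\gamma(p)$; the hypothesis is about that particular $\gamma_r$. When $\gamma(p)$ is not a singleton, the $u \in \Gamma(p)$ you pick may well link to some other $r' := \psi(u) \in \gamma(p)$ with $r' \neq r$, and then Lemma~\ref{lem:convex-flats-inf-dim} only hands you a flat $F$ with $p \in F \subseteq \Gamma_u \subseteq \gamma_{r'}$. You then redefine $r := \psi(u)$, which silently changes the quantifier: you have proved ``there exists some $r \in \gamma(p)$ admitting a $d$-flat,'' whereas the contrapositive you need is ``every $r \in \gamma(p)$ admits a $d$-flat.'' Your proof as written would only establish the corollary under the reading ``for all $r \in \gamma(p)$, no $d$-flat lies in $\gamma_r$,'' which is a strictly stronger hypothesis and makes the result weaker (and, e.g., would not suffice for the abstain example at $(1/4,1/4,1/2)$ as used in the paper).

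The paper closes this gap with Lemma~\ref{lem:set-valued-prop-flats}: for elicitable $\gamma$ over finite $\Y$ and $p \in \relint\P$ with $p \in \gamma_r \cap \gamma_{r'}$, any flat through $p$ contained in $\gamma_{r'}$ is also contained in $\gamma_r$ (via the separating-hyperplane Lemma~\ref{lem:intersect-levelsets} and $p \in \relint F$ from Lemma~\ref{lem:finite-relint-dim}(i)). This is precisely where both the hypothesis that $\gamma$ is elicitable and the hypothesis that $p \in \relint\P$ do real work --- contrary to your remark that elicitability enters ``only implicitly'' and that $\relint\P$ is there to make the Minkowski-sum decomposition behave. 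The Minkowski sum in Lemma~\ref{lem:convex-flats-inf-dim} needs no interior assumption; the relative-interior condition is what lets you transfer the flat from $\gamma_{r'}$ to $\gamma_r$ when the link chooses the ``wrong'' value. Add the case split $r = r'$ versus $r \neq r'$ and invoke Lemma~\ref{lem:set-valued-prop-flats} in the latter case, and your argument becomes the paper's.
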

\begin{proof}
	Let $(L, \psi)$ indirectly elicit $\gamma$ and the convex function $L$ and elicit $\Gamma$.
	As $\Gamma$ is non-empty, there is some $u \in \Gamma(p)$, and suppose $r' = \psi(u)$.
	Take $F \subseteq \Gamma_u$ to be the flat that exists by Lemma~\ref{lem:convex-flats-inf-dim}.
	If $r = r'$, then $p \in F \subseteq \Gamma_u \subseteq \gamma_r$ by indirect elicitation.
	Otherwise, by Lemma~\ref{lem:set-valued-prop-flats}, for elicitable properties with $p \in \gamma_r \cap \gamma_{r'}$, we observe $p \in F\subseteq \gamma_r \iff p \in F \subseteq \gamma_{r'}$.
	
    As above, if no flat $F$ satisfies the above conditions, then no $L\in\Lcvx_d$ indirectly elicits $\gamma$, so $\conscvx(\gamma) \geq \eliccvx(\gamma) \geq d+1$, recalling Corollary~\ref{cor:elic-lb-consis-dim} for the first inequality.
\end{proof}

\section{Discrete-valued predictions}\label{sec:finite-calib}

The main known technique for lower bounds on surrogate dimensions is given by \citet{ramaswamy2016convex} for the Quadrant 1 (target loss and discrete predictions).
The proof heavily builds around the ``limits of sequences'' in the definition of calibration.
By restricting slightly to the broad class of minimizable losses $\Lcvx$, we show their bound follows relatively directly from Corollary~\ref{cor:Pcodim-flat-elic-relint-prop}.
(We conjecture that the minimizability restriction to $\Lcvx$ can be lifted; see \S~\ref{sec:conclusions}.)
\citet{ramaswamy2016convex} construct what they call the subspace of feasible dimensions and give bounds in terms of its dimension.
\begin{definition}[Subspace of feasible directions]\label{def:subspace-feas}
	The \emph{subspace of feasible directions} $\Sc_\C(p)$ of a convex set $\C \subseteq \reals^n$ at $p \in \C$ is $\Sc_\C(p) = \{ v \in \reals^n : \exists \epsilon_0 > 0 $ such that $p + \epsilon v \in \C \; \forall \epsilon \in (-\epsilon_0,\epsilon_0) \}$.
\end{definition}

\citet{ramaswamy2016convex} gives a lower bound on the dimensionality of all consistent convex surrogates, i.e. $\conscvx(\ell) \geq \|p\|_0 - \dim(\Sc_{\gamma_r}(p)) - 1$ for all $p$ and $r \in \gamma(p)$, particularly in the setting where one is given a discrete prediction problem and target loss over finite outcomes.
It turns out that the subspace of feasible directions is essentially a special case of a flat described by Lemma~\ref{lem:convex-flats-inf-dim}.
So, by making a slight restriction to the class of minimizable convex surrogates $\Lcvx$, we can derive this lower bound from our general technique in a way that we find shorter and simpler.

\begin{restatable}[\cite{ramaswamy2016convex} Theorem 18]{corollary}{hariresult}\label{cor:fsd-bound}
  Let $\ell:\R \times \Y \to \reals$ be a discrete loss eliciting $\gamma:\simplex \toto \R$ with $\Y$ finite.
  Then for all $p \in \simplex$ and $r \in \gamma(p)$,
  \begin{equation}
    \conscvx(\gamma) \geq \|p\|_0 - \dim(\Sc_{\gamma_r}(p)) - 1~.~
  \end{equation}
\end{restatable}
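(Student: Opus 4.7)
My plan is to apply Corollary~\ref{cor:Pcodim-flat-elic-relint-prop} after restricting attention to the support of $p$, and then convert the flat it produces into a subspace-of-feasible-directions bound.

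First, I would reduce to a relative-interior setting. Let $S = \supp(p)$ so that $|S| = \|p\|_0$, and let $\Delta_S$ denote the set of distributions on $\Y$ supported on $S$. The restriction $\gamma|_{\Delta_S}$ is still elicited by $\ell$, since $\E_q \ell(\cdot, Y)$ for $q$ supported on $S$ only involves outcomes in $S$, and any consistent convex surrogate for $\gamma$ on $\simplex$ remains consistent for $\gamma|_{\Delta_S}$ after restricting the admissible distributions, so $\conscvx(\gamma) \geq \conscvx(\gamma|_{\Delta_S})$. Since $p \in \relint \Delta_S$, Corollary~\ref{cor:Pcodim-flat-elic-relint-prop} now applies at $p$, and it suffices to show that every $d$-flat $F \subseteq \Delta_S$ with $p \in F \subseteq \gamma_r$ satisfies $d \geq \|p\|_0 - \dim(\Sc_{\gamma_r}(p)) - 1$.

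To carry this out, write $F = \{q \in \Delta_S : \E_q W = 0\}$ for some $W : S \to \reals^d$, and let $B = \{q \in \reals^S : \E_q W = 0\}$, a linear subspace of dimension at least $|S| - d$. Since $p \in B \cap \relint \Delta_S$, standard relative-interior facts yield $p \in \relint F$ and $\affhull(F) = \affhull(\Delta_S) \cap B$. An inclusion--exclusion count in $\reals^S$ then gives $\dim \affhull(F) \geq (|S|-1) + (|S|-d) - |S| = \|p\|_0 - d - 1$. Because $p \in \relint F$, the tangent space $T_p F = \affhull(F) - p \subseteq \reals^S$ has the same dimension, and each such $v$, extended by zero to $\reals^\Y$, satisfies $p \pm \epsilon v \in F \subseteq \gamma_r$ for small $\epsilon > 0$, hence $v \in \Sc_{\gamma_r}(p)$. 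Therefore $\dim \Sc_{\gamma_r}(p) \geq \|p\|_0 - d - 1$, and rearranging yields the required lower bound on $d$.

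The main subtlety is the reduction to $\Delta_S$: without it, the flat furnished by Corollary~\ref{cor:Pcodim-flat-elic-relint-prop} would live in all of $\simplex$, the dimension count would involve $|\Y|$ in place of $\|p\|_0$, and the resulting bound would be far too weak. Once $p$ lies in the relative interior of the ambient simplex, the identification $\affhull(F) = \affhull(\Delta_S) \cap B$ and the extension-by-zero argument are routine, so I do not anticipate any significant obstacle beyond this bookkeeping.
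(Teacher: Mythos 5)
Your proposal is correct and follows essentially the same route as the paper: restrict to the subsimplex on $\supp(p)$ so that $p$ lies in the relative interior, invoke Corollary~\ref{cor:Pcodim-flat-elic-relint-prop} to obtain a $d$-flat through $p$ inside $\gamma_r$, bound its affine dimension by a rank-nullity/inclusion--exclusion count, and transfer the flat's tangent directions back to $\Sc_{\gamma_r}(p)$ via zero-extension. The only cosmetic differences are that you keep $\Delta_S$ as a face of $\reals^\Y$ instead of reparametrizing to $\reals^{\Y'}$ as the paper does, and you inline the dimension count that the paper isolates in Lemma~\ref{lem:finite-relint-dim}(ii).
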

\begin{proof}[Sketch]
  If $\conscvx(\gamma) \leq d$, then there is a $L \in \Lcvx_d$ so that $L$ is consistent with respect to $\gamma$, and in turn, indirectly elicits $\gamma$.
  Lemma~\ref{lem:convex-flats-inf-dim} says that there is some $d$-flat $F = \zeros{V}$ such that $p \in F \subseteq \gamma_r$.
  In particular, if $p \in \relint{\simplex}$, we can see $\dim(F) = \dim(\Sc_{\gamma_r}(p))$.
  Since $\affhull(\simplex)$ has dimension $|\Y| - 1 = \|p\|_0 - 1$, by rank-nullity and $\rank(V) \leq d$ (more precisely, the corresponding linear map $q\mapsto \E_q V$) we have $d \geq \|p\|_0-1-\dim(\Sc_{\gamma_r}(p))$.
  
  When $p \not \in \relint{\simplex}$, we can project down to the subsimplex on the support of $p$, again of dimension $\|p\|_0 - 1$, and modify $L$ and $\ell$ accordingly.
  Now $p$ is in the relative interior of this subsimplex, so the above gives $\conscvx(\gamma) \geq \|p\|_0 - 1 - \dim(\Sc_{\gamma_r}(p))$, where now $\Sc$ is relative to $\reals^{\supp(p)}$.
  Finally, the feasible subspace dimension in the projected space is the same as in the original space because of $p$'s location on a face of $\simplex$.
\end{proof}

There are some cases where the bound provided by Corollaries~\ref{cor:Pcodim-flat-single-val-prop} and~\ref{cor:Pcodim-flat-elic-relint-prop} is strictly tighter than the bound provided by feasible subspace dimension in Corollary~\ref{cor:fsd-bound}.
For an example of how Corollary~\ref{cor:Pcodim-flat-single-val-prop} applies to a discrete property for which there is no target loss -- a non-elicitable property, i.e. Quadrant 2, which is not considered by \citet{ramaswamy2018consistent} -- we refer the reader to Appendix~\ref{app:omitted-examples}.

\paragraph{Example: High-confidence classification.}\label{subsec:examples-finite}
Given the target loss $\ell^{abs}(r,y) := \Ind{r \not \in \{y, \bot\}} + (1/2)\Ind{r = \bot}$,  we can consider the \emph{abstain property} it elicits, where one predicts the most likely outcome $y$ if $Pr[Y=y|x] \geq 1/2$ and ``abstain'' by predicting $\bot$ otherwise.
\citet{ramaswamy2016convex} present a convex surrogate for the abstain loss that takes as input a prediction whose dimension is logarithmic in the number of outcomes, yielding new upper bounds on $\conscvx(\ell^{abs})$ which are an exponential improvement over previous results, e.g.,~\cite{crammer2001algorithmic}.

To lower bound the dimension of convex surrogates, we can consider two different distributions; in the first, our bound yields a strict gap over the feasible subspace dimension bound, and in the second, the bounds are equal.
First, we choose $p = \bullet$ to be the uniform distribution (see Figure~\ref{fig:fsd-flats-abstain}).
In this case, the bound by feasible subspace dimension yields $\conscvx(\ell^{abs}) \geq 3 - 2 - 1 = 0$, as the feasible subspace dimension is $2$ since we are on the relative interior of the level set and simplex, as shown in Figure~\ref{fig:fsd-flats-abstain} (L).

However, consider any $1$-flat containing $\bullet$.
When intersected with the simplex, one can see that any line (a $1$-flat, since $\bullet \in \relint{\simplex}$) in the simplex through $\bullet$ also leaves the cell $\gamma_\bot$, which contains $p$.
See Figure~\ref{fig:fsd-flats-abstain} (R) for intuition; a $1$-flat through $p \in \relint{\simplex}$ would be a line in such a figure.
Therefore, we have no $1$-flat containing $p$ staying in $\gamma_\bot$, so we obtain a better lower bound, $\conscvx(\ell^{abs}) \geq 2$.
Combining this with the upper bounds given by~\cite{ramaswamy2018consistent}, we observe the bound $\conscvx(\ell^{abs}) = 2$ is tight in this case with $|\Y|=3$.

Our bounds sometimes match those of~\citep{ramaswamy2016convex}; consider the distribution ${\color{blue}\star} = (1/4, 1/4, 1/2)$, shown in Figure~\ref{fig:fsd-flats-abstain}.
The feasible subspace dimension of both $\gamma_\bot$ and $\gamma_3$ at ${\color{blue}\star}$ is $1$, since one only moves toward the distributions $(0,1/2, 1/2)$ and $(1/2, 0, 1/2)$ without leaving the level sets, and the three points are collinear in $\affhull(\simplex)$, suggesting $\Sc_{\gamma_\bot}(q) = 1$.  
This yields $\conscvx(\ell^{abs}) \geq 3 - 1- 1 = 1$.
The same line segment defines a flat contained in both $\gamma_\bot$ and $\gamma_3$, so we have $\conscvx(\ell^{abs}) \geq 1$ by Corollary~\ref{cor:Pcodim-flat-elic-relint-prop}, matching the feasible subspace dimension bound.

Bounds using $d$-flats appear to work well at distributions where previous bounds via feasible subspace dimension would have been vacuous.
In essence, flats allow us a ``global'' view of the property we are eliciting, while the feasible subspace method only permits a ``local'' look at the property, so we find our method works better for distributions in $\relint\simplex$.

\begin{figure}[htb]
	\centering
	\begin{tabular}{cc}
		\includegraphics[width=.35\textwidth]{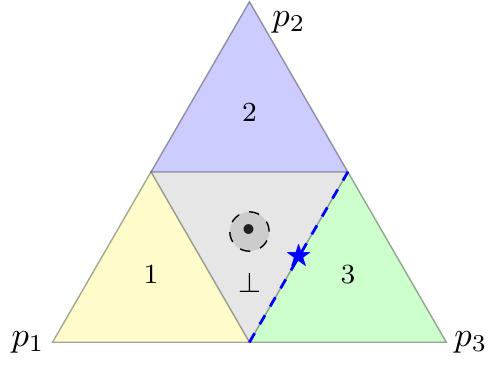} &
		\includegraphics[width=.35\textwidth]{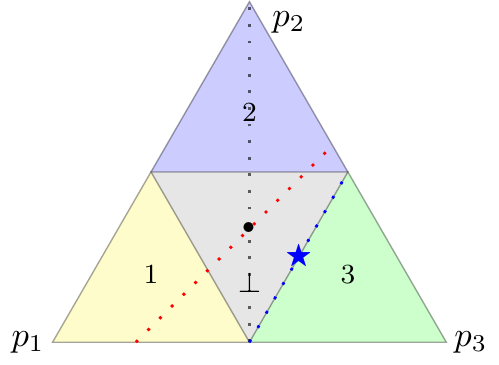} \\
	\end{tabular}
	\caption{(Left) Feasible subspace dimension $\Sc_{\gamma_\bot}(\bullet) = 2$ and $\Sc_{\gamma_\bot}({\color{blue}\star}) = 1$, giving the bound $\conscvx(\ell^{abs}) \geq 3- 1-1 = 1$.
          (Right) No $1$-flat through $\bullet$ (a line since $\bullet \in \relint{\simplex})$  stays fully contained in $\gamma_\bot$, so $\conscvx(\ell^{abs}) \geq 2$.}
	\label{fig:fsd-flats-abstain}
\end{figure}

\section{Continuous-valued predictions}\label{sec:contin-consis}

In continuous estimation problems, often one is not given a target loss, but instead a target (conditional) statistic of the data one wishes to estimate.
Examples include estimating the mean or variance of $y$ conditioned on a given $x$.
In this setting, Lemma~\ref{lem:convex-flats-inf-dim} gives lower bounds on the prediction dimension of convex losses with a link to the desired conditional statistic, i.e., the convex elicitation complexity.
In particular,
Theorem~\ref{thm:bayes-risk-lower-bound} below yields new bounds on the convex elicitation complexity of statistics which quantify risk or uncertainty such as variance, entropy, or financial risk measures.

These bounds address an open question of~\citet{frongillo2020elicitation}, that of developing a theory of elicitation complexity with respect to convex-elicitable properties.
The lower bounds of previous work are essentially all with respect to \emph{identifiable} properties;
a property is $d$-indentifiable if its level sets are all $d$-flats.
\citet{frongillo2020elicitation} rely on finding a dimension $d$ such that the level sets of certain risk measures $\gamma$ have too much curvature to contain any $d$-flat.
Thus, the elicitation complexity with respect to identifiable properties is greater than $d$.

In contrast, properties elicited by non-smooth convex losses are generally not identifiable.
For example, the properties elicited by hinge loss and the abstain surrogate are not identifiable, as their level sets are not flats (see Figure~\ref{fig:fsd-flats-abstain}).
It therefore might appear that entirely new ideas are needed.
Our framework is closely related to identifiability, however; Lemma~\ref{lem:convex-flats-inf-dim} states that the level sets of $d$-dimensional convex-elicitable properties, if not $d$-flats themselves, are unions of $d$-flats.
Thus, the general logic of~\citet{frongillo2020elicitation} can still apply.
In particular, we recover their main lower bound for the large class of Bayes risks.
\btw{Old version for finite $\Y$ is in the NeurIPS 2020 submission; especially lots more detail on variance / norm / entropy examples}

\begin{definition}
  \label{def:bayes-risk}
  Given loss function $L:\R\times\Y \to \reals$ for some report set $\R$, the \emph{Bayes risk} of $L$ is defined as $\lbar(p) := \inf_{r \in \R} \E_pL(r,Y)$.
\end{definition}

\begin{condition}\label{cond:v-interior}
  For some $r\in\range\Gamma$, the level set $\Gamma_r = \zeros{V}$ is a $d$-flat presented by some $V:\Y\to\reals^d$ such that \btw{I think we only need $\Gamma_r \subseteq \zeros{V}$.  I think I'll update the proof at some point to reflect this, since it actually gets a bit easier to follow.} $0\in\interior\{\E_pV : p\in\P\}$.
\end{condition}

\begin{restatable}{theorem}{bayesrisklowerbound}\label{thm:bayes-risk-lower-bound}
  Let $\P$ be a set of Lebesgue densities supported on the same set for all $p \in \P$.
  Let $\Gamma:\P\to\reals^d$ satisfy Condition~\ref{cond:v-interior} for some $r\in\reals^d$.
  Let $L \in \Lcvx$ elicit $\Gamma$ such that $\lbar$ is non-constant on $\Gamma_r$.
  Then $\conscvx(\lbar) \geq \eliccvx(\lbar) \geq d+1$.
\end{restatable}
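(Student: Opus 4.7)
The plan is to apply Corollary~\ref{cor:Pcodim-flat-single-val-prop} to the single-valued (real-valued) property $\lbar$ at an arbitrary $p \in \Gamma_r$: it suffices to rule out any $d$-flat through $p$ sitting inside $\lbar^{-1}(\lbar(p))$. Fix $p \in \Gamma_r$ and set $c_0 := \lbar(p) = \E_p L(r,Y)$. Suppose for contradiction that some $d$-flat $F = \zeros{W}$ satisfies $p \in F \subseteq \lbar^{-1}(c_0)$; by Lemma~\ref{lem:convex-flats-inf-dim} this is precisely the obstruction to $\eliccvx(\lbar) > d$ that I need to derive.

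The heart of the argument is a supergradient computation exploiting the concavity of $\lbar$ (an infimum of affine-in-$p$ functions). Since $r \in \Gamma(p)$, the function $L(r,\cdot)$ is a supergradient of $\lbar$ at $p$: $\lbar(q) \leq \E_q L(r,Y)$ for all $q$, with equality at $p$. For any tangent direction $\mu$ at $p$ with $\E_\mu W = 0$, the perturbations $p \pm t\mu$ remain in $F$ for small $t>0$ and satisfy $\lbar(p \pm t\mu) = c_0$; applying the supergradient inequality for both signs of $t$ forces $\E_\mu L(r,Y) = 0$. Thus $L(r,\cdot) \in \spn(W_1,\ldots,W_d,1)$ on the support of $p$, and by the common-support hypothesis this identity propagates to every density in $\P$, yielding $\E_q L(r,Y) = c_0$ for all $q \in F$.

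Combined with $\lbar(q) \leq \E_q L(r,Y)$, the equality $\lbar(q) = c_0$ now forces $r \in \Gamma(q)$, so $F \subseteq \Gamma_r = \zeros{V}$. Setting $V' := [V;\, L(r,\cdot) - c_0]:\Y \to \reals^{d+1}$, we conclude $F \subseteq \zeros{V'}$. By Condition~\ref{cond:v-interior} the map $\mu \mapsto \E_\mu V$ is surjective onto $\reals^d$ on $T_p\P$, and non-constancy of $\lbar|_{\Gamma_r}$ says exactly that $L(r,\cdot) \notin \spn(V,1)$, boosting $\mu \mapsto \E_\mu V'$ to rank $d+1$; meanwhile $\mu \mapsto \E_\mu W$ has rank at most $d$. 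The inclusion $F \subseteq \zeros{V'}$ forces $\ker \E_{(\cdot)} W \subseteq \ker \E_{(\cdot)} V'$ on $T_p\P$, hence $\rank \E_{(\cdot)} V' \leq \rank \E_{(\cdot)} W$, i.e.\ $d+1 \leq d$ -- a contradiction. Corollary~\ref{cor:Pcodim-flat-single-val-prop} then delivers $\conscvx(\lbar) \geq \eliccvx(\lbar) \geq d+1$.

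The main obstacle in turning this plan into a formal proof is the bookkeeping in the infinite-dimensional setting: justifying that directions $\mu$ with $\E_\mu W = 0$ correspond to admissible perturbations of $p$ inside $\P$, and that the ``rank'' of the various expectation maps on $T_p\P$ behaves as in finite dimensions. The common-support hypothesis and Condition~\ref{cond:v-interior} are precisely the ingredients that underwrite these local computations, paralleling the role they play in the proof of Lemma~\ref{lem:convex-flats-inf-dim}.
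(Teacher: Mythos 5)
Your high-level shape (reduce to ruling out a $d$-flat inside a level set of $\lbar$, use the supergradient structure of the Bayes risk, then count ranks) is the right instinct, but two steps that you dismiss as ``bookkeeping'' are in fact the substance of the theorem, and as written they fail. The first is your two-sided perturbation argument: at an arbitrary $p\in\Gamma_r$ there is no guarantee that the directions $\mu$ with $\E_\mu W=0$ that are \emph{two-sided feasible} in $\P$ span $\ker(\E_{(\cdot)}W)$ restricted to $\spn(\P-\{p\})$, so you cannot conclude $L(r,\cdot)\in\spn(W_1,\dots,W_d,\ones)$. (The intermediate conclusion $F\subseteq\Gamma_r$ is salvageable by a different route: $\lbar$ constant on the convex set $F$ plus strict concavity of $\lbar$ across distinct level sets of $\Gamma$ --- Lemma~\ref{lem:elic-complex-bayes-concave}, used in Lemma~\ref{lem:bayes-risk-lower-bound} --- forces $\Gamma$ constant on $F$.) The second, and fatal, gap is the final rank step: the set inclusion $F=\zeros{W}\subseteq\zeros{V'}$ only tells you that $\spn(F-\{p\})\subseteq\ker(\E_{(\cdot)}V')$; to get $\ker(\E_{(\cdot)}W)\subseteq\ker(\E_{(\cdot)}V')$ you need $\spn(F-\{p\})$ to equal $\ker(\E_{(\cdot)}W)$ on the tangent space, which requires something like $0\in\interior\{\E_qW:q\in\P\}$ (cf.\ Lemma~\ref{lem:lin-alg-span}, or Lemma~\ref{lem:finite-relint-dim} in the finite case). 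The flat $W$ comes from the adversarial surrogate and you have no such interiority for it: at a non-interior $p$ the affine subspace $\ker(\E_{(\cdot)}W)$ can meet $\P$ in a degenerate set (in the extreme, $F=\{p\}$), which sits inside $\zeros{V'}$ with no contradiction. Since $\P$ here is an infinite-dimensional set of densities, you cannot retreat to ``$p$ in the relative interior'' either.

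This is exactly why the paper's proof does not fix an arbitrary $p$ and apply Corollary~\ref{cor:Pcodim-flat-single-val-prop}. Instead it assumes the nested flats exist at \emph{every} point of $\Gamma_r$ (which Lemma~\ref{lem:convex-flats-inf-dim} supplies), and then Lemma~\ref{lem:lin-alg-nested-level-sets} uses Condition~\ref{cond:v-interior} on the \emph{outer} map $f=\E_{(\cdot)}V$ to manufacture one special base point $v=\sum_i\alpha_iv_i$ (a strict barycentric combination of witnesses of $0\in\interior f(\P)$) at which the inner flat's images $\hat f_v(v_i)$ can be proven affinely independent; that is what delivers $m\geq d$, and a separate isomorphism-theorem argument upgrades this to $m\geq d+1$ using non-constancy of $\lbar$. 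To repair your proof you would need to reproduce that selection-of-base-point argument; the interiority hypothesis is doing real work there, not underwriting routine local computations.
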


We now illustrate the theorem with two important examples: variance and conditional value at risk.
Several other applications from \citet{frongillo2020elicitation}, such as spectral risk measures, entropy, and norms, follow similarly.

\paragraph{Example: Variance.}
As a warm-up, let us see how to show $\eliccvx(\Var)=2$, meaning the lowest dimension of a convex loss to estimate conditional variance is 2. 
This lower bound will follow from Theorem~\ref{thm:bayes-risk-lower-bound} using that variance is the Bayes risk of squared loss $L(r,y) = (r-y)^2$, which elicits the mean $\Gamma(p) = \E_p Y$.
Interestingly, while perhaps intuitively obvious, even this simple result is novel.
In particular, the well-known fact that the variance is not elicitable does not yield a lower bound of 2, as it does not rule out the variance being a link of a real-valued convex-elicitable property; cf.~\citet[Remark 1]{frongillo2020elicitation}.

\begin{corollary}
  \label{cor:variance}
  Let $\P$ be a set of continuous Lebesgue densities on $\Y=\reals$ with all $p \in \P$ having the same support.
  If there exist $p,q,q'\in\P$ with $\E_p Y = \E_q Y \neq \E_{q'} Y$ and $\Var(p) \neq \Var(q)$,
  \btw{Raf: I think this condition is tight actually: if there is no such triple in $\P$, I think $\eliccvx(\Var) \leq 1$ (i.e., $\Var$ is constant or a function of the mean)}
  then $\conscvx(\Var)=\eliccvx(\Var)=2$.
\end{corollary}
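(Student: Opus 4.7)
The plan is a two-sided argument, with the lower bound carrying the only real content. For the upper bound $\conscvx(\Var)\le 2$, I would exhibit the standard two-dimensional surrogate
\[
L(u,y) = (u_1 - y)^2 + (u_2 - y^2)^2, \qquad \psi(u) = u_2 - u_1^2.
\]
This $L$ lies in $\Lcvx_2$: it is convex in $u$ for every $y$, and is minimizable under the finite-fourth-moment condition implicit in the corollary's setting. Its $p$-expected minimizer is $(\E_p Y, \E_p Y^2)$, and $\psi$ evaluated there returns $\E_p Y^2 - (\E_p Y)^2 = \Var(p)$. Because $\E_p L(\cdot,Y)$ is strictly convex with a unique minimizer depending continuously on $p$ and $\psi$ is continuous, a routine argument converts this pointwise optimality into $\mu$-consistency with $\mu(r,p) = |r-\Var(p)|$, so $\conscvx(\Var)\le\eliccvx(\Var)\le 2$.

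For the matching lower bound $\eliccvx(\Var)\ge 2$ I would invoke Theorem~\ref{thm:bayes-risk-lower-bound} with the one-dimensional loss $L_0(r,y) = (r-y)^2 \in \Lcvx_1$. This elicits $\Gamma(p) = \E_p Y$, whose Bayes risk is exactly $\Var$. Each level set is $\Gamma_r = \{p\in\P : \E_p Y = r\} = \zeros{V}$ for $V(y) = y - r$, a $1$-flat. To apply the theorem I must produce a single $r \in \range\Gamma$ that simultaneously satisfies Condition~\ref{cond:v-interior} and on which $\Var$ is non-constant.

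Producing such an $r$ is the only place where the hypothesis on $(p,q,q')$ is used and is the main technical step. Write $r_0 := \E_p Y = \E_q Y$ and $\mu' := \E_{q'} Y \neq r_0$, and for $\alpha \in (0,1)$ define the mixtures $p_\alpha := \alpha p + (1-\alpha) q'$ and $q_\alpha := \alpha q + (1-\alpha) q'$. By convexity of $\P$ both lie in $\P$, and both have mean $r_\alpha := \alpha r_0 + (1-\alpha)\mu'$. The mean-range $M := \{\E_p Y : p\in\P\} \subseteq \reals$ is convex (an interval) and contains $r_0$ and $\mu'$, so $r_\alpha$ lies strictly between them and hence in $\interior M$; this is precisely $0 \in \interior\{\E_p V : p\in\P\}$ for $V(y) = y - r_\alpha$, verifying Condition~\ref{cond:v-interior}. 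The standard mixture identity $\Var(\alpha\pi + (1-\alpha)q') = \alpha \Var(\pi) + (1-\alpha)\Var(q') + \alpha(1-\alpha)(\E_\pi Y - \mu')^2$ applied to $\pi \in \{p,q\}$ yields
\[
\Var(p_\alpha) - \Var(q_\alpha) = \alpha\bigl(\Var(p) - \Var(q)\bigr) \neq 0,
\]
so $\Var$ is non-constant on $\Gamma_{r_\alpha}$.

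Applying Theorem~\ref{thm:bayes-risk-lower-bound} at this $r_\alpha$ gives $\eliccvx(\Var) \ge d+1 = 2$, and Corollary~\ref{cor:elic-lb-consis-dim} upgrades this to $\conscvx(\Var) \ge 2$. Combined with the upper bound we conclude $\conscvx(\Var) = \eliccvx(\Var) = 2$. The main obstacle I anticipate is simply the bookkeeping around this mixture construction, ensuring we land in the interior of $M$ while preserving a variance gap; everything else reduces to cited facts about squared loss and Theorem~\ref{thm:bayes-risk-lower-bound}.
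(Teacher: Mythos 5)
Your proposal is correct and follows essentially the same route as the paper: the upper bound via the two-moment surrogate $L(r,y)=(r_1-y)^2+(r_2-y^2)^2$ with $\psi(r)=r_2-r_1^2$, and the lower bound by applying Theorem~\ref{thm:bayes-risk-lower-bound} to squared loss, verifying Condition~\ref{cond:v-interior} at a mean value strictly between $\E_qY$ and $\E_{q'}Y$, and exhibiting non-constancy of $\Var$ on that level set by mixing $p$ and $q$ with $q'$ (the paper fixes $\alpha=1/2$ where you keep a general $\alpha$). Your added care in converting exact elicitation of the upper-bound surrogate into consistency is a reasonable refinement of a step the paper leaves implicit.
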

\begin{proof}
  For the upper bound, we may elicit the first two moments via the convex loss $L(r,y) = (r_1-y)^2 + (r_2-y^2)^2$, and recover the variance via $\psi(r) = r_2-r_1^2$, giving $\eliccvx(\Var) \leq 2$.
  Now for the lower bound.
  Without loss of generality, $\E_qY < \E_{q'}Y$.
  Let $r = \tfrac 1 2 \E_qY + \tfrac 1 2 \E_{q'}Y$, and define $V:\Y\to\reals, y\mapsto y-r$.
  Then $\zeros{V} = \{p'\in\P \mid \E_{p'}Y=r\} = \Gamma_r$ where $\Gamma:p'\mapsto \E_{p'}Y$ is the mean.
  As $\E_q Y < r < \E_{q'} Y$, we conclude $\E_q V < 0 < \E_{q'} V$.
  We have now satisfied Condition~\ref{cond:v-interior} for $d=1$.
  To apply Theorem~\ref{thm:bayes-risk-lower-bound}, it remains to show that $\Var$ is non-constant on $\Gamma_r$.
  By our assumptions and the definition of $\Var$, we have $\E_p Y^2 \neq \E_q Y^2$.
  Letting $p_1 = \tfrac 1 2 q + \tfrac 1 2 q'$, $p_2 = \tfrac 1 2 p + \tfrac 1 2 q'$, we have $\E_{p_i}Y = r$ for $i\in\{1,2\}$, but $\E_{p_1} Y^2 = \tfrac 1 2 \E_qY^2 + \tfrac 1 2 \E_{q'}Y^2 \neq \tfrac 1 2 \E_pY^2 + \tfrac 1 2 \E_{q'}Y^2 = \E_{p_2}Y^2$.
  As $p_1,p_2$ have the same mean but different second moments, we conclude $\Var(p_1) \neq \Var(p_2)$.
\end{proof}

\paragraph{Example: Conditional Value at Risk.}

\citet{frongillo2020elicitation} observe that one of the most prominent financial risk measures, the conditional value at risk (CVaR), can be expressed as a Bayes risk.
In particular, for $0 < \alpha < 1$, we may define
\begin{align}
  \CVaR_\alpha(p)
  &= \inf_{r\in\reals} \E_p\left\{ \tfrac 1 \alpha(r-Y)\ones_{r\geq Y}- r\right\}~,
  \label{eq:elic-complex-es-3}
\end{align}
which is the Bayes risk of the transformed pinball loss $L_\alpha(r,y) = \tfrac 1 \alpha(r-y)\ones_{r\geq y}-r$.
In turn, $L_\alpha$ elicits the $\alpha$-quantile, the quantity $q_\alpha(p)$ such that $\Pr_p[Y \geq q_\alpha(p)] = \alpha$.
Following \citet{frongillo2020elicitation}, we will restrict to the set $\P_q$ of probability measures over $\reals$ with connected support and whose CDFs are strictly increasing on their support, so that $q_\alpha$ is single-valued.
Under mild assumptions, we find that there is no consistent real-valued convex surrogate for $\CVaR_\alpha$.

\begin{corollary}
  \label{cor:spectral-risks}
  Let $\P$ be a set of continuous Lebesgue densities on $\Y=\reals$ with all $p \in \P$ having support on the same interval.
  If we have $p_1,p_2,p_3,p_2'\in\P$ with $q_\alpha(p_1) < q_\alpha(p_2) < q_\alpha(p_3)$ and $\CVaR_\alpha(p_2) \neq \CVaR_\alpha(p_2')$,
  then $\conscvx(\CVaR_\alpha) \geq \eliccvx(\CVaR_\alpha) \geq 2$.
\end{corollary}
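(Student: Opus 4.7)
The plan is to invoke Theorem~\ref{thm:bayes-risk-lower-bound} with $L = L_\alpha$, the transformed pinball loss, which lies in $\Lcvx$ and elicits the single-valued $\alpha$-quantile property $\Gamma = q_\alpha$ on $\P \subseteq \P_q$; by~\eqref{eq:elic-complex-es-3} its Bayes risk is exactly $\CVaR_\alpha$. Since $\eliccvx(\CVaR_\alpha) \leq \conscvx(\CVaR_\alpha)$ is Corollary~\ref{cor:elic-lb-consis-dim}, it suffices to verify the hypotheses of Theorem~\ref{thm:bayes-risk-lower-bound} with $d = 1$: namely, Condition~\ref{cond:v-interior} at some $r$, and non-constancy of $\CVaR_\alpha$ on $\Gamma_r$.

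To verify Condition~\ref{cond:v-interior}, I would set $r := q_\alpha(p_2)$ and $V(y) := \ones_{y \leq r} - (1 - \alpha)$. Because each $p \in \P$ has a strictly increasing continuous CDF $F_p$, the expectation $\E_p V = F_p(r) - (1 - \alpha)$ vanishes iff $q_\alpha(p) = r$, so $\Gamma_r = \zeros{V}$. The hypothesis $q_\alpha(p_1) < r < q_\alpha(p_3)$ yields $\E_{p_1} V > 0 > \E_{p_3} V$, and (using the convexity of $\P$ implicit in Section~\ref{sec:related-work}) the continuous affine map $\lambda \in [0,1] \mapsto \E_{\lambda p_1 + (1-\lambda) p_3} V$ has image an interval straddling $0$, which gives $0 \in \interior \{\E_p V : p \in \P\}$.

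For the non-constancy of $\CVaR_\alpha$ on $\Gamma_r$, formula~\eqref{eq:elic-complex-es-3} specializes on $\Gamma_r$ to the affine functional $p \mapsto \tfrac{1}{\alpha} \E_p[(r - Y)\ones_{Y \leq r}] - r$, so the task reduces to producing some $\tilde p \in \Gamma_r$ with $\E_{\tilde p}[(r - Y) \ones_{Y \leq r}] \neq \E_{p_2}[(r - Y) \ones_{Y \leq r}]$. If $q_\alpha(p_2') = r$, then $\tilde p := p_2'$ works immediately. Otherwise, I would choose $p_k \in \{p_1, p_3\}$ on the side of $r$ opposite to $q_\alpha(p_2')$, and consider the two-parameter family $p_{\lambda, \mu} := \lambda p_2' + \mu p_k + (1 - \lambda - \mu) p_2$ inside $\P$; the single linear constraint $F_{p_{\lambda, \mu}}(r) = 1 - \alpha$ cuts out a one-dimensional sub-family of $\P$ lying in $\Gamma_r$ and passing through $p_2$, along which the affine $\CVaR_\alpha$ is either constant or delivers the required $\tilde p$.

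The hard part is ruling out the degenerate constant case: the hypothesis $\CVaR_\alpha(p_2) \neq \CVaR_\alpha(p_2')$ controls two quantile-dependent values rather than the common-threshold functional that governs $\Gamma_r$. I would close this gap by expressing $\CVaR_\alpha(p_2')$ directly in terms of $\E_{p_2'}[(r - Y) \ones_{Y \leq r}]$ plus an explicit correction involving $q_\alpha(p_2') - r$ and $F_{p_2'}(r) - (1 - \alpha)$, then showing that vanishing slope of $\CVaR_\alpha$ along the sub-family forces $\CVaR_\alpha(p_2) = \CVaR_\alpha(p_2')$, contradicting the hypothesis. Once non-constancy is confirmed, Theorem~\ref{thm:bayes-risk-lower-bound} yields $\eliccvx(\CVaR_\alpha) \geq d + 1 = 2$, which combined with Corollary~\ref{cor:elic-lb-consis-dim} gives the claimed bound on $\conscvx(\CVaR_\alpha)$.
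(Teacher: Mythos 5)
Your main line is exactly the intended one: the paper proves this corollary (like Corollary~\ref{cor:variance}) by feeding the transformed pinball loss into Theorem~\ref{thm:bayes-risk-lower-bound}, verifying Condition~\ref{cond:v-interior} with the quantile identification function $V(y)=\ones_{y\le r}-(1-\alpha)$ at $r=q_\alpha(p_2)$ (using $p_1,p_3$ and convexity of $\P$ to straddle $0$), and witnessing non-constancy of the Bayes risk $\CVaR_\alpha$ on $\Gamma_r$ with $p_2$ and $p_2'$. Up to and including the sentence ``If $q_\alpha(p_2')=r$, then $\tilde p:=p_2'$ works immediately,'' your argument is correct and complete; the primed notation $p_2'$ is meant to indicate precisely that $q_\alpha(p_2')=q_\alpha(p_2)$, in parallel with $\E_pY=\E_qY$ in Corollary~\ref{cor:variance}, so at that point you are done.

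The remainder of your proposal, which tries to also cover $q_\alpha(p_2')\neq r$, contains a genuine gap that cannot be closed from the stated hypotheses. The quantity $A(p):=\tfrac1\alpha\E_p[(r-Y)\ones_{Y\le r}]-r$ agrees with $\CVaR_\alpha$ only on $\Gamma_r$; off $\Gamma_r$ one has the strict inequality $A(p_2')>\CVaR_\alpha(p_2')$, so constancy of $A$ (equivalently of $\CVaR_\alpha$) on $\Gamma_r$ imposes no relation at all between $\CVaR_\alpha(p_2)$ and $\CVaR_\alpha(p_2')$ when $q_\alpha(p_2')\neq r$ --- your proposed ``explicit correction'' argument would have to bridge level sets, which an affine identity confined to one level set cannot do. Indeed, the literal hypotheses with $q_\alpha(p_2')\neq q_\alpha(p_2)$ are consistent with $\CVaR_\alpha$ being an injective function of $q_\alpha$ on $\P$ (e.g.\ $\P$ close to a location family), in which case $\CVaR_\alpha$ is constant on every $\Gamma_r$, the pinball loss itself indirectly elicits it, and $\eliccvx(\CVaR_\alpha)=1$; so no proof strategy could succeed there. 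The fix is not a cleverer construction but reading (or restating) the hypothesis as $q_\alpha(p_2)=q_\alpha(p_2')$, after which your first observation already yields non-constancy on $\Gamma_r$ and Theorem~\ref{thm:bayes-risk-lower-bound} gives $\eliccvx(\CVaR_\alpha)\ge 2$, with $\conscvx\ge\eliccvx$ from Corollary~\ref{cor:elic-lb-consis-dim}.
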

As first shown by \citet{fissler2016higher}, the pair $(\CVaR_\alpha,q_\alpha)$ is jointly indentifiable and elicitable, but not by any convex loss~\citep[Prop.\ 4.2.31]{fissler2017higher}.
We conjecture the stronger statement $\eliccvx(\CVaR_\alpha) \geq 3$, which if true would constitute an interesting gap between elicitation complexity for identifiable and convex-elicitable properties.

\section{Conclusions and future work}\label{sec:conclusions}
In this work, we show that indirect property elicitation can be a powerful necessary condition for the existence of a consistent surrogate loss (Theorem~\ref{thm:consistent-implies-indir-elic}).
Furthermore, we introduce a new lower bound (Corollaries~\ref{cor:Pcodim-flat-single-val-prop} and~\ref{cor:Pcodim-flat-elic-relint-prop}) on convex consistency dimension that is generally applicable and extends previous results from both the discrete (Corollary~\ref{cor:fsd-bound}) and continuous (Corollaries~\ref{cor:variance} and \ref{cor:spectral-risks}) estimation settings.

Several important questions remain open.
Particularly for the discrete settings, we would like to know whether one can lift the restriction that surrogates always achieve a minimum; we conjecture positively.
Of course, we would like to characterize $\conscvx$ and $\eliccvx$ and develop a general framework for constructing surrogates achieving the best possible prediction dimension.
Moreover, the practical reason why consistency is desired is to ensure the guarantee of empirical risk minimization (ERM) rates; however, the relationship between ERM rates and property elicitation has not been studied.
\btw{for neurips and colt, we had commented out everything below this for space}

\newpage

\bibliography{diss,extra}

\newpage
\appendix
\section{Notes on calibration}\label{app:calibration}

When given a discrete target loss, such as for classification-like problems, direct empirical risk minimization is typically NP-hard, forcing one to find a more tractable surrogate.
To ensure consistency, the literature has embraced the notion of \emph{calibration} from~\citet[Chapter 3]{steinwart2008support}, which aligns with the definition in~\citet{tewari2007consistency} for multiclass classification, and its generalizations to arbitrary discrete target losses~\citep{agarwal2015consistent,ramaswamy2016convex}.
Calibration is more tractable and weaker than consistency, yet the two are equivalent under suitable assumptions~\citep{tewari2007consistency,ramaswamy2016convex},notably in Quadrant 1.
Intuitively, calibration says one cannot achieve the optimal surrogate loss while linking to a suboptimal target prediction.

\begin{definition}[Calibrated: Quadrant 1]\label{def:calibrated-finite}
	Let $\ell : \R \times \Y \to \reals$ be a discrete target loss.
	A surrogate loss $L : \reals^d \times \Y \to \reals$  and link $\psi:\reals^d \to \R$ pair $(L, \psi)$ is \emph{$\P$-calibrated with respect to} $\ell$ if 
	\begin{equation}\label{eq:calibration}
	\forall p \in \P: \inf_{u \in \reals^d : \psi(u) \not \in \argmin_r \E_p\ell(r,Y)} \exploss{L}{u}{p} > \inf_{u \in \reals^d} \exploss{L}{u}{p}~.~
	\end{equation}
	We simply say $L$ is calibrated if $\P = \simplex$.
\end{definition}

Many works characterize calibrated surrogates for specific discrete target losses~\citep{zhang2004statistical,lin2004note,bartlett2006convexity,tewari2007consistency}, including the canonical 0-1 loss for binary and multiclass classification.
We give another definition of calibration which is a special case of calibration via~\citet{steinwart2008support}, and show it is equivalent to Definition~\ref{def:calibrated-finite} in discrete prediction settings, but can be applied in continuous estimation settings as well.
We use this more general definition of calibration when proving statements about the relationship between consistency, calibration, and indirect elicitation.

The close connection between indirect elicitation and consistency was first explored by \citet{agarwal2015consistent}.
In particular, calibration of $L \in \L$ with respect to $\ell$ implies indirect elicitation quite directly: take $u\in\reals^d$ and $p\in\Gamma_u$, implying $u\in\Gamma(p)$.
From eq.~\eqref{eq:elic}, $\exploss{L}{u}{p} = \inf_{u'\in\reals^d} \exploss{L}{u'}{p}$, so we must have $\psi(u) \in \gamma(p)$ from eq.~\eqref{eq:calibration}, as desired.

\begin{definition}[Calibrated: Quadrants 1 and 3]\label{def:calibrated-general}
	A loss $L:\reals^d \times \Y \to \reals$ is \emph{$\P$-calibrated} with respect to a loss $\ell : \R \times \Y \to \reals$ if there is a link $\psi : \reals^d \to \R$ such that, for all distributions $p \in \P$, there exists a function $\zeta : \reals_+ \to \reals_+$ with $\zeta$ continuous at $0^+$ and $\zeta(0) = 0$ such that for all $u \in \reals^d$, we have
	\begin{equation}\label{eq:calibrated-general}
	\ell( \psi(u); p) - \risk{\ell}(p)  \leq \zeta \left(  \exploss{L}{u}{p} - \risk{L}(p) \right)~.~
	\end{equation}
	If $\P = \simplex$, we simply say $(L, \psi)$ is calibrated.
\end{definition}

Consider the following four conditions: Suppose we are given $\zeta:\reals_+ \to \reals_+$.
\begin{enumerate}
	\item [A] $\zeta$ satisfies $\zeta : 0 \mapsto 0$ and is continuous at $0$.
	\item [B] $\epsilon_m \to 0 \implies \zeta(\epsilon_m) \to 0$.
	\item [C] Given $\zeta:\reals \to \reals_+$, for all $u \in \reals^d$, $R_\ell(\psi(u); p) \leq \zeta(R_L(u;p))$.
	\item [D] For all $p \in \P$ and sequences $\{u_m\}$ so that $R_L(u_m; p) \to 0$, we have $R_\ell(\psi(u_m); p) \to 0$.
\end{enumerate}
The existence of a function $\zeta$ so that $(A \wedge C)$ defines calibration as in Definition~\ref{def:calibrated-general}, and we show $A \iff B$ in Lemma~\ref{lem:continuous-iff-limits}.  
Lemma~\ref{lem:calib-converging-regrets} shows calibration if and only if $D$, which yields a condition equivalent to calibration without dependence the function $\zeta$.

\begin{proposition}
	When $\R$ and $\Y$ are finite, a continuous loss and link $(L, \psi)$ are $\P$-calibrated with respect to a target loss $\ell$ via Definition~\ref{def:calibrated-general} if and only if they are $\P$-calibrated via Definition~\ref{def:calibrated-finite}.
\end{proposition}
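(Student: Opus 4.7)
The plan is to exploit the finiteness of $\R$ and $\Y$ to extract a positive ``gap'' in $\ell$-regret. Fix $p\in\P$; since $\ell$ takes only finitely many values, the set $A_p := \{R_\ell(r,p) : r\in\R,\; R_\ell(r,p) > 0\}$ is a finite subset of $(0,\infty)$. If $A_p = \emptyset$ then every $r$ is $\ell$-optimal at $p$ and both definitions hold trivially at $p$ (in Definition~\ref{def:calibrated-finite} the infimum is over the empty set, hence $+\infty$, and Definition~\ref{def:calibrated-general} holds with $\zeta\equiv 0$); otherwise set $c_p := \min A_p > 0$.

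For the forward direction, suppose $(L,\psi)$ satisfies Definition~\ref{def:calibrated-general}, and let $\zeta$ be the associated function at $p$. Since $\zeta(0) = 0$ and $\zeta$ is continuous at $0^+$, I can pick $\delta > 0$ with $\zeta(\epsilon) < c_p$ for all $\epsilon \in [0,\delta)$. For any $u$ with $\psi(u) \notin \argmin_{r} \E_p \ell(r,Y)$ one has $R_\ell(\psi(u),p) \geq c_p$, so the calibration inequality forces $\zeta(R_L(u,p)) \geq c_p$ and hence $R_L(u,p) \geq \delta$. Taking the infimum over such $u$ yields Definition~\ref{def:calibrated-finite}.

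For the converse, assume Definition~\ref{def:calibrated-finite}, and define
\[
  \zeta_p(\epsilon) := \sup\{R_\ell(\psi(u),p) : u \in \reals^d,\; R_L(u,p) \leq \epsilon\}~.
\]
This is a well-defined $\reals_+$-valued function since $\ell$ is bounded on $\R\times\Y$, and the inequality required by Definition~\ref{def:calibrated-general} holds by construction. It remains to verify $\zeta_p(0) = 0$ and right-continuity at $0$. For the first, $R_L(u,p) = 0$ means $u$ attains the infimum of $\E_p L(\cdot,Y)$, and Definition~\ref{def:calibrated-finite} then forces $\psi(u) \in \argmin_r \E_p \ell(r,Y)$, so $R_\ell(\psi(u),p) = 0$. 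For right-continuity, I would argue by contradiction: if along some sequence $\epsilon_n \to 0^+$ one has $\zeta_p(\epsilon_n) \geq c_p/2$, pick witnesses $u_n$ with $R_L(u_n,p) \leq \epsilon_n$ and $R_\ell(\psi(u_n),p) \geq c_p/2 > 0$, so $\psi(u_n) \notin \argmin$. Since $\R$ is finite, pigeonhole yields a subsequence along which $\psi(u_n) = r^*$ for some fixed $r^* \notin \argmin$, yet $R_L(u_n,p) \to 0$ along this subsequence, contradicting the uniform positive gap guaranteed by Definition~\ref{def:calibrated-finite}.

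The main obstacle is the right-continuity step, where finiteness of $\R$ is used essentially through pigeonhole; finiteness of $\Y$ enters only through boundedness of $\ell$. The continuity hypothesis on $L$ in the proposition does not appear to enter the argument in either direction under this formulation, and could plausibly be dropped.
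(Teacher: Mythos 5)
Your proof is correct and follows essentially the same route as the paper's: one direction extracts the positive $\ell$-regret gap $c_p$ that finiteness of $\R$ (and $\Y$) provides, and the converse constructs $\zeta_p(\epsilon)=\sup\{R_\ell(\psi(u),p): R_L(u,p)\le\epsilon\}$, which is exactly the calibration function the paper builds inside Lemma~\ref{lem:calib-converging-regrets}. The differences are presentational: you inline the sequence-based reductions (Lemmas~\ref{lem:continuous-iff-limits} and~\ref{lem:calib-converging-regrets}) into a direct quantitative argument, your pigeonhole step is not actually needed since the uniform gap of Definition~\ref{def:calibrated-finite} already yields the contradiction, and your observation that continuity of $L$ plays no role is consistent with the paper, where it is invoked only to guarantee attainment of the infimum.
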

\begin{proof}
$\implies$
	We prove the contrapositive; if $(L, \psi)$ is not calibrated with respect to $\ell$ by Definition~\ref{def:calibrated-finite}, then it is not calibrated via Definition~\ref{def:calibrated-general} either.
	If $(L, \psi)$ are not calibrated with respect to $\ell$ by Definition~\ref{def:calibrated-finite}, then there is a $p \in \P$ so that $\inf_{u : \psi(u) \not \in \gamma(p)} \exploss{L}{u}{p} = \inf_u \exploss{L}{u}{p}$.
	Thus there is a sequence $\{u_m\}$ so that $\lim_{m \to \infty} \psi(u_m) \not \in \gamma(p)$ and $\exploss{L}{u_m}{p} \to \risk{L}(p)$.  
	Now we have $R_L(u_m; p) \to 0$ but $R_\ell(\psi(u_m); p) \not \to 0$, so by Lemma~\ref{lem:calib-converging-regrets}, we contradict calibration by Definition~\ref{def:calibrated-general}.

$\impliedby$
Suppose there was a function $\zeta$ satisfying the bound in eq.~\eqref{eq:calibrated-general} for a fixed distribution $p \in \P$.
Observe the bound in eq.~\eqref{eq:calibration} can be written as $R_L(u,p) > 0$ for all $p \in \simplex$ and $u$ such that $\psi(u) \neq \gamma(p)$. 
By eq.~\eqref{eq:calibrated-general}, for any sequence $\{u_m\}$ so that $\psi(u_m) \not \to \gamma(p)$, we have must have $\zeta(R_\ell(\psi(u_m), p)) \not \to 0$ as we would otherwise contradict the bound in eq.~\eqref{eq:calibrated-general} since $R_\ell(\psi(u), p) \not \to 0$. 
Therefore $R_L(u_m, p) \not \to 0$; thus, the strict inequality holds.
\end{proof}

The following Lemma shows that conditions $A$ and $B$ are equivalent, so that we can using condition $B$ in lieu of condition $A$ in the proof of Lemma~\ref{lem:calib-converging-regrets}
\begin{lemma}\label{lem:continuous-iff-limits}
	A function $\zeta:\reals \to \reals$ is continuous at $0$ and $\zeta(0) = 0$ if and only if the sequence $\{u_m\} \to 0 \implies \zeta(u_m) \to 0$.
\end{lemma}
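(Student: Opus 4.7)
The plan is to recognize this as the standard sequential characterization of continuity at a single point: a function between metric spaces is continuous at a point if and only if it preserves convergent sequences to that point. Since $\reals$ is a metric space (so first-countable), this equivalence holds without any extra hypotheses. The value $\zeta(0)=0$ is built into both sides of the biconditional, so the argument essentially reduces to the classical equivalence of $\epsilon$--$\delta$ continuity and sequential continuity at $0$, specialized to the value $0$.

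For the forward direction ($\Rightarrow$), I would assume $\zeta$ is continuous at $0$ with $\zeta(0)=0$, fix an arbitrary $\epsilon>0$, and use the $\epsilon$--$\delta$ definition to produce $\delta>0$ such that $|u|<\delta$ implies $|\zeta(u)|=|\zeta(u)-\zeta(0)|<\epsilon$. Given any sequence $u_m\to 0$, choose $N$ so that $|u_m|<\delta$ for all $m\geq N$; then $|\zeta(u_m)|<\epsilon$ for all $m\geq N$, proving $\zeta(u_m)\to 0$.

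For the backward direction ($\Leftarrow$), I would argue the two conclusions separately. First, applying the hypothesis to the constant sequence $u_m\equiv 0$ forces $\zeta(0)=\lim_m \zeta(u_m)=0$. Then I would establish continuity at $0$ by contrapositive: if $\zeta$ were not continuous at $0$, there would exist $\epsilon>0$ such that for every $\delta>0$ some $u$ with $|u|<\delta$ satisfies $|\zeta(u)|\geq\epsilon$. Instantiating $\delta=1/m$ yields a sequence $u_m$ with $|u_m|<1/m$, hence $u_m\to 0$, but $|\zeta(u_m)|\geq\epsilon$ for all $m$, contradicting the assumption that $\zeta(u_m)\to 0$.

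There is no real obstacle here; the only mild subtlety is remembering to extract $\zeta(0)=0$ from the sequential hypothesis (via the constant zero sequence) before proving the $\epsilon$--$\delta$ condition, since the statement packages both properties into a single biconditional. The result is then just the standard metric-space fact that continuity and sequential continuity coincide at a point.
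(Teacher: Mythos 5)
Your proof is correct and takes essentially the same route as the paper: the forward direction is the standard continuity-preserves-limits argument, and the backward direction uses the constant zero sequence to force $\zeta(0)=0$ and the negation of $\epsilon$--$\delta$ continuity to build a sequence $u_m\to 0$ with $\zeta(u_m)\not\to 0$. Your explicit $\delta=1/m$ construction is in fact slightly more careful than the paper's appeal to the existence of a bad sequence.
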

\begin{proof}
	$\implies$ Suppose we have a sequence $\{u_m\} \to 0$.
	By continuity, we have $\lim_{u_m \to 0}\zeta(u_m) = \zeta(0) = 0$, so $\zeta(u_m) \to 0$.
	
	$\impliedby$ Suppose $\zeta(0) \neq 0$ but $\zeta$ was continuous at $0$.
	The constant sequence $\{u_m\} = 0$ then converges to $0$, but as $\zeta$ is continuous at $0$, we must have $\lim_{m \to \infty}\zeta(u_m) = \zeta(0) \neq 0$, so $\zeta(u_m) \not \to 0$.
	
	Now suppose $\zeta(0) = 0$ but $\zeta$ was not continuous at $0$.
	There must be a sequence $\{u_m\} \to 0$ so that $\lim_{m \to \infty}\zeta(u_m) \neq \zeta(0) = 0$, so $\zeta(u_m) \not \to 0$.
\end{proof}

Lemma~\ref{lem:calib-converging-regrets} now gives a condition equivalent to calibration without requiring one to already have a function $\zeta$ in mind.
\begin{lemma}\label{lem:calib-converging-regrets}
	A continuous surrogate and link $(L,\psi)$ are $\P$-calibrated (via definition~\ref{def:calibrated-general}) with respect to $\ell$ if and only if, for all $p \in \P$ and sequences $\{u_m\}$ so that $R_L(u_m; p) \to 0$, we have $R_\ell(\psi(u_m); p) \to 0$.
\end{lemma}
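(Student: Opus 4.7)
The plan is to prove both directions of the equivalence, leveraging Lemma~\ref{lem:continuous-iff-limits} which interchanges condition A (continuity of $\zeta$ at $0$ with $\zeta(0)=0$) and condition B (sequential continuity, $\epsilon_m\to 0 \implies \zeta(\epsilon_m)\to 0$). Continuity of $L$ and $\psi$ itself does not appear to play a role in the argument; the statement only uses the definitions of $R_L$ and $R_\ell$ together with basic real analysis.

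For the forward direction ($\implies$), I would fix $p\in\P$ and let $\zeta$ be the function guaranteed by Definition~\ref{def:calibrated-general}. For any sequence $\{u_m\}$ with $R_L(u_m;p)\to 0$, condition C gives
\[
0 \leq R_\ell(\psi(u_m);p) \leq \zeta\bigl(R_L(u_m;p)\bigr),
\]
and Lemma~\ref{lem:continuous-iff-limits} applied to $\zeta$ (which satisfies A by hypothesis) forces the right-hand side to tend to $0$. The squeeze theorem then delivers $R_\ell(\psi(u_m);p)\to 0$, which is exactly condition D.

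For the backward direction ($\impliedby$), I would fix $p\in\P$ and construct the function explicitly by
\[
\zeta_p(\epsilon) := \sup\{R_\ell(\psi(u);p) : u\in\reals^d,\, R_L(u;p)\leq \epsilon\},
\]
with the convention $\sup\emptyset := 0$. Condition C is immediate from this definition, since any given $u$ belongs to the set defining $\zeta_p(R_L(u;p))$. To establish condition B (and hence A via Lemma~\ref{lem:continuous-iff-limits}), I would argue by contradiction: suppose $\epsilon_m\to 0$ while $\zeta_p(\epsilon_m)\not\to 0$, and pass to a subsequence so $\zeta_p(\epsilon_m)\geq \delta>0$. By the definition of sup, for each $m$ there exists $u_m$ with $R_L(u_m;p)\leq \epsilon_m$ and $R_\ell(\psi(u_m);p)\geq \delta/2$. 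Then $R_L(u_m;p)\to 0$ while $R_\ell(\psi(u_m);p)\not\to 0$, contradicting D. The special case $\epsilon_m\equiv 0$ yields $\zeta_p(0)=0$.

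The main obstacle is really just the sup-then-subsequence extraction in the backward direction; everything else is bookkeeping. A minor technical point is that $\zeta_p$ might be $+\infty$ for large $\epsilon$ when $R_\ell$ is unbounded, but continuity at $0^+$ forces $\zeta_p$ to be finite on a neighborhood of $0$, which is all condition A cares about; one can either work with extended-real-valued $\zeta_p$ or modify it away from $0$ in any nondecreasing fashion that preserves the C-bound, without affecting the argument.
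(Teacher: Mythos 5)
Your proof is correct and follows essentially the same approach as the paper's: the forward direction is the squeeze argument using $\zeta$'s sequential continuity at $0$, and the backward direction constructs the modulus explicitly as $\zeta_p(\epsilon) = \sup\{R_\ell(\psi(u);p) : R_L(u;p) \leq \epsilon\}$ and verifies condition B by contradiction. Your write-up is, if anything, tighter than the paper's: you correctly note that continuity of $L$ is not actually invoked, you handle the $\zeta_p(0)=0$ case cleanly, and you flag (and dispose of) the potential $+\infty$ issue, which the paper glosses over; the paper's backward argument meanders through a monotonicity observation and a nonemptiness remark that aren't load-bearing, whereas your subsequence extraction is the clean core of the argument.
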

\begin{proof}
	$\implies$ Take a sequence $\{u_m\}$ so that $R_L(u_m;p) \to 0$.
	Since $\zeta(0) = 0$ and $\zeta$ is continuous at $0$, we have $\zeta(R_L(u_m;p)) \to 0$.
	As the bound from Equation~\eqref{eq:calibrated-general} is satisfied for all $u \in \reals^d$ by assumption, we observe
	\begin{align*}
	\forall m, \; &0 \leq R_\ell(\psi(u_m); p) \leq \zeta(R_L(u_m;p))\\
	\implies &0 \leq \lim_{m \to \infty} R_\ell(\psi(u_m); p) \leq \lim_{m \to \infty} \zeta(R_L(u_m;p)) = 0\\
	\implies &0 = \lim_{m\to\infty} R_\ell(\psi(u_m); p) ~.~
	\end{align*}

	$\impliedby$ 
	Fix $p \in \P$, and consider $\zeta(c) := \sup_{u: R_L(u,p) \leq c} R_\ell(\psi(u); p)$.  
	We will show $R_L(u_m; p) \to 0 \implies R_\ell(\psi(u_m); p) \to 0$ gives calibration via the function $\zeta$ constructed above. 
	With $\zeta$ as constructed, we observe that the bound in equation~\eqref{eq:calibrated-general} is satisfied for all $u \in \reals^d$ and apply Lemma~\ref{lem:continuous-iff-limits} to observe that if there is a sequence $\{\epsilon_m\} \to 0$ so that $\zeta(\epsilon_m) \not \to 0$, it is because $R_L(u_m, p) \not \to 0 \not\implies R_\ell(\psi(u_m), p) \to 0$.

Now, we observe that the bound in Equation~\eqref{eq:calibrated-general} is satisfied for all $u \in \reals^d$ by construction of $\zeta$.
Let $S(v) := \{u' \in \reals^d : R_L(u';p) \leq R_L(v,p) \}$.
Showing $R_\ell(\psi(u);p) \leq \sup_{u' \in S(u)} R_\ell(\psi(u') ; p)$ for all $u \in \reals^d$ gives the condition $C$.
As $u$ is in the space over which the supremum is being taken (as $R_L(u;p) \leq R_L(u;p)$), we then have calibration by definition of the supremum.

Now suppose there exists a sequence $\{\epsilon_m\} \to 0$ so that $\zeta(\epsilon_m) \not \to 0$.
Consider $S(\epsilon) = \{u \in \reals^d : R_L(u,p) \leq \epsilon\}$.

\begin{align*}
\epsilon_1 \leq \epsilon_2 &\implies S(\epsilon_1) \subseteq S(\epsilon_2)\\
&\implies \zeta(\epsilon_1) \leq \zeta(\epsilon_2)~.~
\end{align*}
Now suppose there exists a sequence $\{u_m\}$ so that $R_L(u_m, p) \to 0$.
Then for all $\epsilon > 0$, there exists a $m' \in \mathbb{N}$ so that $R_L(u_m, p) < \epsilon$ for all $m \geq m'$.
Since this is true for all $\epsilon$, we have $S(\epsilon)$ nonempty for all $\epsilon > 0$, and therefore $\zeta(c)$ is discrete for all $c > 0$.
Now if $\zeta(\epsilon_m) \not \to 0$, it must be because $R_\ell(\psi(u_m), p) \not \to 0$ for some sequence converging to zero surrogate regret, and therefore we contradict the statement $R_L(u_m, p) \to 0 \implies R_\ell(\psi(u_m), p) \to 0$.

Moreover, we argue that such a sequence of $\{u_m\}$ with converging surrogate regret always exists by continuity and boundedness from below of the surrogate loss,
since we can take the constant sequence at the (attained) infimum.
\end{proof}

\subsection{Relating calibration, consistency, and indirect elicitation.}
Even with the more general notion of calibration that extends beyond discrete predictions, we still have consistency implying calibration.
\begin{proposition}\label{prop:consistent-implies-calibrated}
	If a loss and link $(L, \psi)$ are consistent with respect to a loss $\ell$, then they are calibrated with respect to $\ell$.
\end{proposition}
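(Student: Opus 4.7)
The plan is to establish calibration via the sequence-of-regrets reformulation in Lemma~\ref{lem:calib-converging-regrets}: it suffices to show that, for every $p\in\P$ and every sequence $\{u_m\}\subset\reals^d$ with $R_L(u_m,p)\to 0$, one has $R_\ell(\psi(u_m),p)\to 0$.

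Fix such $p$ and $\{u_m\}$. From Definition~\ref{def:consistent-ell}, the weak hypothesis on $\D$ gives some $D\in\D$ and $x\in\X$ with $D_x=p$ and $\alpha:=D(\{x\}\times\Y)>0$. Using that $L\in\L$ is minimizable, I fix a measurable $f^\star:\X\to\reals^d$ with $f^\star(x')\in\argmin_u \E_{D_{x'}}L(u,Y)$ for every $x'\in\X$, so that $\E_D L(f^\star(X),Y)=\inf_f \E_D L(f(X),Y)$. Build two sequences of hypotheses: the perturbed $f_m$, defined by $f_m(x)=u_m$ and $f_m\equiv f^\star$ off $x$, and the baseline constant sequence $g_m\equiv f^\star$. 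A point-mass decomposition yields
\[
\E_D L(f_m(X),Y) - \E_D L(g_m(X),Y) \;=\; \alpha\, R_L(u_m,p) \;\to\; 0,
\]
and the $L$-risk of $\{g_m\}$ is already at the infimum, so both sequences have $L$-excess risk tending to zero. Applying consistency to each forces $\E_D \ell((\psi\circ f_m)(X),Y)$ and $\E_D \ell((\psi\circ g_m)(X),Y)$ to approach the same limit $\inf_f \E_D \ell((\psi\circ f)(X),Y)$.

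Decomposing these two $\ell$-expectations by the same point mass and taking the difference cancels the common off-$x$ contribution, leaving $\alpha\,\big(\E_p \ell(\psi(u_m),Y) - \E_p \ell(\psi(f^\star(x)),Y)\big) \to 0$. Separately, the constant sequence $\{g_m\}$ shows $\E_D \ell((\psi\circ f^\star)(X),Y) = \inf_f \E_D \ell((\psi\circ f)(X),Y)$; combined with pointwise non-negativity of $\ell$-regret, this identifies $\psi(f^\star(x))$ as $\ell$-optimal at $p$, i.e., $\E_p \ell(\psi(f^\star(x)),Y) = \risk{\ell}(p)$. Dividing by $\alpha>0$ gives $R_\ell(\psi(u_m),p)\to 0$, and Lemma~\ref{lem:calib-converging-regrets} concludes calibration.

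The main obstacle is that Definition~\ref{def:consistent-ell} only requires $D$ to have \emph{some} positive mass at $x$, rather than being concentrated there; the off-$x$ portion of $D$ contributes a nontrivial baseline to both the $L$- and $\ell$-expectations that must be handled carefully. Pairing $\{f_m\}$ with the baseline sequence $\{g_m\}$ is what makes this baseline cancel, and the auxiliary use of consistency on $\{g_m\}$ is what pins down $\psi(f^\star(x))$ as $\ell$-optimal at $p$. Were $\D$ assumed to contain pure point masses, the proof would collapse to a one-line computation using only the definition of $\alpha=1$.
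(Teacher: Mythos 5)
Your proof is correct and rests on the same two pillars as the paper's: Lemma~\ref{lem:calib-converging-regrets} to reduce calibration to the sequence-of-regrets condition, and the clause in Definition~\ref{def:consistent-ell} guaranteeing some $D\in\D$ with $D_x=p$ that lets you lift a surrogate-optimizing sequence $\{u_m\}$ into a sequence of hypotheses. The paper argues the contrapositive and, crucially, reads ``$D$ has a point mass on $x$'' as the $\X$-marginal being entirely concentrated at $x$ (this is how the phrase is used in the proof of Theorem~\ref{thm:consistent-implies-indir-elic}, where $\E_D L(f_m(X),Y)=\E_{D_x}L(f_m(x),Y)$); hence $\alpha=1$, the off-$x$ baseline never appears, and the whole argument is the one-line computation you describe at the end. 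Your version buys robustness to the weaker reading $\alpha>0$ via the two-sequence cancellation and the auxiliary application of consistency to $\{g_m\}$; the price is the unexamined existence of a measurable $f^\star$ attaining $\inf_f\E_D L(f(X),Y)$ --- minimizability is only assumed for conditionals in $\P$, and a measurable selection of pointwise argmins is needed off $x$ --- none of which arises under the paper's reading. One further caveat, shared equally by the paper's own proof and therefore not a gap specific to yours: both arguments silently identify $\inf_f\E_D\ell((\psi\circ f)(X),Y)$ with the Bayes $\ell$-risk $\risk{\ell}(p)$, which presumes $\inf_u\E_p\ell(\psi(u),Y)=\inf_{r\in\R}\E_p\ell(r,Y)$, i.e., that the link can reach an $\ell$-optimal report.
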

\begin{proof}
	We show the contrapositive.
	If $(L, \psi)$ are not calibrated with respect to $\ell$, then there is a sequence $\{u_m\}$ such that $R_L(u_m; p) \to 0$ but $R_\ell(\psi(u_m); p) \not \to 0$ via Lemma~\ref{lem:calib-converging-regrets}.
	Suppose $D \sim \X \times\Y$ has only one $x \in \X$ with $Pr_D(X = x) > 0$ so that $p := D_x$ and $\E_D f(X,Y) = \E_p f(x, Y)$.
	Consider any sequence of functions $\{f_m\} \to f$ with $f_m(x) = u_m$ for all $f_m$.
	Now we have $\E_D L(f_m(X), Y) \to \inf_f \E_D L(f(X), Y)$, but $\E_D \ell(\psi \circ f(X), Y) \not \to \inf_f \E_D \ell(\psi \circ f(X), Y)$, and therefore $(L, \psi)$ is not consistent with respect to $\ell$.
\end{proof}

Moreover, we have calibration implying indirect elicitation.
\begin{lemma}\label{lem:calib-implies-indir}
	If a surrogate and link $(L, \psi)$ with $L \in \L$ are calibrated with respect to a loss $\ell:\R \times\Y \to \reals$, then $L$ indirectly elicits the property $\gamma := \prop{\ell}$.
\end{lemma}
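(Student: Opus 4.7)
The plan is to unpack the definitions directly. Let $\Gamma := \prop{L}$ denote the (single) property elicited by $L$, which is well-defined because $L \in \L$ and so $\exploss{L}{\cdot}{p}$ attains its minimum for each $p \in \P$. I want to verify the indirect elicitation condition $\Gamma_u \subseteq \gamma_{\psi(u)}$ for every $u \in \reals^d$, where $\gamma := \prop{\ell}$. So I fix an arbitrary $u \in \reals^d$ and $p \in \Gamma_u$.

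First, from $p \in \Gamma_u$ we get $u \in \Gamma(p)$, and the elicitation identity in eq.~\eqref{eq:elic} gives $\exploss{L}{u}{p} = \inf_{u'} \exploss{L}{u'}{p} = \risk{L}(p)$. Hence the surrogate regret vanishes: $\exploss{L}{u}{p} - \risk{L}(p) = 0$. Next, I invoke the calibration hypothesis in the form of Definition~\ref{def:calibrated-general}: there exists $\zeta:\reals_+ \to \reals_+$ with $\zeta(0) = 0$ (continuity at $0$ is not even needed here) such that
\begin{equation*}
\exploss{\ell}{\psi(u)}{p} - \risk{\ell}(p) \;\leq\; \zeta\bigl( \exploss{L}{u}{p} - \risk{L}(p) \bigr) \;=\; \zeta(0) \;=\; 0.
\end{equation*}
Since the target regret is always nonnegative, this forces $\exploss{\ell}{\psi(u)}{p} = \risk{\ell}(p)$, i.e.\ $\psi(u) \in \argmin_{r} \exploss{\ell}{r}{p} = \gamma(p)$, equivalently $p \in \gamma_{\psi(u)}$.

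As $u$ and $p \in \Gamma_u$ were arbitrary, we have established $\Gamma_u \subseteq \gamma_{\psi(u)}$ for every $u$, which is exactly the definition of $(L,\psi)$ indirectly eliciting $\gamma$. There is no real obstacle: the argument is essentially the one already sketched in the paragraph preceding the lemma, and the only place the hypothesis $L \in \L$ is used is to guarantee that $\Gamma(p)$ is nonempty so that $\Gamma_u$ has the intended meaning. Note also that only the value $\zeta(0) = 0$ of the calibration function is used, not its continuity, so the lemma is robust to weaker variants of the calibration definition.
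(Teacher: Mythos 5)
Your proof is correct and follows essentially the same route as the paper's: both fix $p \in \Gamma_u$, observe the surrogate regret is zero, apply the calibration bound of Definition~\ref{def:calibrated-general} with $\zeta(0)=0$ to force the target regret to zero, and conclude $p \in \gamma_{\psi(u)}$. Your added observation that continuity of $\zeta$ at $0$ is not needed is a correct (if minor) refinement.
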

\begin{proof}
	Let $\Gamma$ be the unique property directly elicited by $L$, and fix $p \in \simplex$ with $u$ such that $p \in \Gamma_u$.
	We know such a $u$ exists since $\Gamma(p) \neq \emptyset$.
	As $p \in \Gamma_u$, then $\zeta(\exploss{L}{u}{p} - \risk{L}(p)) = \zeta(0) = 0$, we observe the bound $\ell(\psi(u); p) \leq \risk{\ell}(p)$.
	We also have $\ell(\psi(u); p) \geq \risk{\ell}(p)$ by definition of $\risk{\ell}$, so we must have $\ell(\psi(u);p) = \risk{\ell}(p) = \ell(\gamma(p); p)$, and therefore, $p \in \gamma_{\psi(u)}$.
	Thus, we have $\Gamma_u \subseteq \gamma_{\psi(u)}$, so $L$ indirectly elicits $\gamma$.
\end{proof}

Combining the two results, we can observe the result of Theorem~\ref{thm:consistent-implies-indir-elic} another way: \emph{through calibration}.

\section{Reconstructing \citet[Thm.\ 16]{ramaswamy2016convex}}

\begin{lemma}\label{lem:finite-relint-dim}
  Let the $d$-flat $F\subseteq \P$ (defined over finite $\Y$) contain some $p\in\relint{\P}$.
  Then 
  \begin{enumerate}
  	\item[(i)] $p \in \relint{F}$; 
  	\item[(ii)] $\dim(\Scr_F(p)) \geq \dim(\affhull(\P)) - d$.
  \end{enumerate}
\end{lemma}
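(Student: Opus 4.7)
The main idea is to recognize that, for finite $\Y$, the $d$-flat $F = \zeros{W}$ is the intersection of $\P$ with a linear subspace of $\reals^\Y$. Specifically, identify $W$ with the $d \times |\Y|$ matrix whose columns are $W(y)$; then $\E_q W = W q$, so $F = \P \cap \ker(W)$. Both conclusions will then follow from standard facts about relative interiors combined with the rank-nullity theorem.

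For (i), I would first establish the identity $\affhull(F) = \affhull(\P) \cap \ker(W)$. The inclusion $\subseteq$ is immediate. For $\supseteq$, take any $q \in \affhull(\P) \cap \ker(W)$; since $p \in \relint{\P}$, for sufficiently small $t > 0$ the point $p_t := p + t(q - p)$ lies in $\relint{\P}$ (a standard fact about relative interiors of convex sets), and because $p, q \in \ker(W)$ we have $p_t \in \ker(W)$ as well, so $p_t \in F$; then $q = p + t^{-1}(p_t - p) \in \affhull(F)$. Next, because $p \in \relint{\P}$, there is a ball $B \subseteq \affhull(\P)$ around $p$ with $B \subseteq \P$. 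The set $B \cap \ker(W)$ is relatively open in $\affhull(\P) \cap \ker(W) = \affhull(F)$, contains $p$, and is contained in $F = \P \cap \ker(W)$, which shows $p \in \relint{F}$.

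For (ii), since $p \in \relint{F}$ by part (i), the subspace of feasible directions $\Scr_F(p)$ is exactly the linear subspace $T_F$ parallel to $\affhull(F)$. Writing $T_\P$ for the linear subspace parallel to $\affhull(\P)$, we have $T_F = T_\P \cap \ker(W)$ by the identity above. Applying the standard dimension inequality $\dim(U \cap V) \geq \dim U - \codim V$ inside $\reals^\Y$, and using $\codim \ker(W) = \rank(W) \leq d$, yields $\dim(T_F) \geq \dim(T_\P) - d = \dim(\affhull(\P)) - d$, which is the desired bound.

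The only subtle step is the identity $\affhull(F) = \affhull(\P) \cap \ker(W)$, since in general the affine hull of an intersection is strictly smaller than the intersection of affine hulls; it is precisely the assumption $p \in \relint{\P}$ that rescues us, by letting us perturb any candidate $q$ slightly toward $p$ to land in $F$. Once that equality is in hand, both (i) and (ii) are short corollaries of elementary convex analysis and linear algebra.
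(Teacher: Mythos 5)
Your proof is correct and follows essentially the same route as the paper's: both arguments exploit that $p\in\relint{\P}$ lets you extend segments within $F$ in both directions (giving $p\in\relint{F}$ and the identification of $\Scr_F(p)$ with the kernel of the linear map $q\mapsto \E_q W$ intersected with the subspace parallel to $\affhull(\P)$), and then count dimensions using $\rank(W)\le d$. The paper phrases the counting via rank–nullity applied to $T_W$ restricted to $\spn(\P-\{p\})$ while you use the subspace-intersection dimension formula, but these are the same computation.
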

\begin{proof}
  
  As $F$ is a $d$-flat, we have some $W:\Y \to \reals^d$ such that $F = \zeros{W}$.
  Throughout, given a point (typically a distribution) $p$ and convex set $P$, we define $P_p := P - \{p\}$.
  Define $T_W:\spn(\P_p)\to\reals^d, v\mapsto \E_v W$.
  
  (i)
  Since $p \in \relint{\P}$, for all $q \in \P$, there is some small enough $\epsilon > 0 $ so that for all $\alpha \in (-\epsilon,\epsilon)$, the point $q_\alpha := p - \alpha(q - p)$ is still in $\P$.
  In particular, for $q \in F$, we claim $q_\alpha \in F$.
  As $p,q \in F$, we have $\E_pW = \E_qW = \vec 0$.
  By linearity of expectation, we then have $\E_{q_\alpha} W = \vec 0$.
  This implies $q_\alpha \in F$, and therefore $p \in \relint{F}$.
  
  (ii)
  We first show $\spn(F_p) = \Scr_F(p)$.
  First, take $v \in \Scr_F(p)$, and take $\epsilon_0$ as in the definition.
  For $\epsilon = \epsilon_0 / 2$, we then have $p + \epsilon v \in F \implies \epsilon v \in F_p$, and therefore, $v \in \spn(F_p)$.  
  Now take $v \in \spn(F_p)$.
  Since $p \in \relint{F}$ (i), we have $\vec 0 \in \relint{F_p}$.
  Therefore there is an $\epsilon_0 > 0$ so that $\epsilon v \in F_p$ for all $\epsilon \in (-\epsilon_0, \epsilon_0)$ by convexity of $F$.
  Therefore, $v \in \Scr_F(p)$, and we observe $\Scr_F(p) = \spn(F_p)$.
  
  We now show $\Scr_F(p) = \ker(T_W)$.
  Observe that $\Scr_F(p) \subseteq \ker(T_W)$ follows trivially from the definitions of the two functions. 
  Now let $v \in \ker(T_W)$, and $v' \in F_p$.
  This means $\E_v W = \vec 0$, so it suffices to show $v = c v' \in F_p$, thus showing $v \in \Scr_F(p)$.
  Since $p \in \relint{\P}$, we must have $\vec 0 \in \relint{F_p}$, so we know there is some small enough $\epsilon > 0$ so that $-\alpha v' \in F_p$ for $\alpha \in (-\epsilon, \epsilon)$.
  Take $c = -\alpha$, and we conclude $v \in \Scr_F(p)$.
  Therefore, $\ker(T_W) = \Scr_F(p)$.

  We finally want to show $\dim(\affhull(\P)) = \dim(\spn(\P_p))$.
  Consider that any $q \in \spn(\P_p)$ can be written as a scalar multiple of an element of $\P_p$, which can be written as a convex combination of elements of the minimal basis $\P_p$.
  In particular, since $\vec 0 \in \P_p$, it can be written as an affine combination of elements of the basis, so $\dim(\affhull(\P)) \geq \dim(\spn(\P_p))$.
  We also have $\affhull(\P) - \{p\} \subseteq \spn(\P_p)$, so $\dim(\affhull(\P)) = \dim(\affhull(\P) - \{p\}) \leq \spn(\P_p)$.
  Therefore, $\dim(\affhull(\P)) = \dim(\spn(\P_p))$.

  As $\Y$ is a finite set, $\spn(\P_p)$ is a finite-dimensional vector space.
  The rank-nullity theorem states $\dim(\im(T_W)) + \dim(\ker(T_W)) = \dim(\spn(\P_p)) = \dim(\affhull(\P))$.
  As $\dim(\im(T_W)) \leq d$, and we have shown above that $\Scr_F(p) = \spn(F_p) = \ker(T_W)$, the conclusion follows.
\end{proof}

\hariresult*
\begin{proof}
  Let $L \in \Lcvx_d$ be a calibrated surrogate for $\ell$, and let $\Gamma := \prop[\simplex]{L}$.
  Consider $\Y' := \{y\in\Y : p_y > 0\}$ and $p' = (p_y)_{y\in\Y'} \in \Delta_{\Y'}$.
  Take $L' := L|_{\Y'}$ and $\ell' := \ell|_{\Y'}$.
  Define $h:\reals^{\Y'} \to \reals_\Y$ such that $h(q') = q$ such that $q_y = q'_y$ for $y\in\Y'$ and $q_y = 0$ otherwise.
  Take $\Gamma' = \Gamma \circ h$, $\gamma' = \gamma \circ h$.
  
  We wish to first  show $L'$ indirectly elicits $\gamma'$.
  Since $L$ indirectly elicits $\gamma$, we have a link $\psi$ such that for all $u \in \reals^d$, $\Gamma_u \subseteq \gamma_{\psi(u)}$.
  As $\Gamma'(q) = \Gamma(h(q))$ and $\gamma'(q) = \gamma(h(q))$, we have $q \in \Gamma'_u \iff h(q) \in \Gamma_u \implies h(q) \in \gamma_{\psi(u)} \iff (q_y)_{y \in \Y'} \in \gamma'_{\psi(u)}$, and therefore, $L'$ indirectly elicits $\gamma'$ via the link $\psi \circ \proj(\Y')$, where $\proj(\Y') : q \mapsto (q_y)_{y \in \Y'}$.

We aim to show $\dim(\Sc_{\gamma_r}(p)) \geq \dim(\Sc_{\gamma'_r}(p'))$.
  We do this by showing that $h(\Sc_{\gamma'_r}(p')) \subseteq \Sc_{\gamma_r}(p)$, and the result holds as $h$ is linear and injective.
Suppose $v \in h(\Sc_{\gamma'_r}(p'))$, then there exists a $v'$ so that $v = h(v')$ and an $\epsilon_0 > 0$ such that $\epsilon v' + p' \in \gamma'_r$ for all $\epsilon \in (-\epsilon_0, \epsilon_0)$.
Since $h$ is linear and recall $h(\gamma'_r) \subseteq \gamma_r$, this implies $\epsilon v + p \in \gamma_r$ for all $\epsilon \in (-\epsilon_0, \epsilon_0)$.
Therefore $v \in \Sc_{\gamma_r}(p)$, and the result follows.

  As $L'$ indirectly elicits $\gamma'$, by Corollary~\ref{cor:Pcodim-flat-elic-relint-prop}, we know there exists a $d$-flat $F$ with $p' \in F \subseteq \gamma'_r$.
  Taking $\P = \simplexp$, we know $p' \in \relint{\simplexp}$ by construction, so we can apply Lemma~\ref{lem:finite-relint-dim}(ii), which gives $\dim(\Scr_F(p')) \geq  \dim(\affhull(\simplexp)) - d = \|p\|_0 - 1-d$.
  \footnote{To reason about $\dim(\affhull(\simplexp)) = \|p\|_0 - 1$, observe that the uniform distribution on $\simplexp$ has full support and therefore requires $\|p\|_0 - 1$ elements in its basis.}
  Additionally, $\Scr_F(p') \subseteq \Sc_{\gamma'_r}(p')$ by subset inclusion of the sets themselves.
  Chaining these results, we obtain 
  \begin{align*}
    \dim(\Sc_{\gamma_r}(p)) &\geq \dim(\Sc_{\gamma'_r}(p'))  \geq \dim(\Scr_F(p')) \geq \|p\|_0 - 1 - d~.~
  \end{align*}
\end{proof}

\section{Proof of Theorem~\ref{thm:bayes-risk-lower-bound}}\label{app:pf-bayesrisklowerbound}

\newcommand{\EL}{\mathcal{E}}
\newcommand{\defeq}{:=}
\newcommand{\conv}{\mathrm{conv}}

\subsection{General setting of elicitation complexity}

We briefly introduce the general notion of elicitation complexity, of which Definition~\ref{def:cvx-elic-complex} is a special case, as some statements are more naturally made in this general setting.

\begin{definition}
  \label{def:refine}
  $\Gamma'$ \emph{refines} $\Gamma$ if for all $r'\in\range\Gamma'$ there exists $r\in\range\Gamma$ with $\Gamma'_{r'} \subseteq \Gamma_r$.
\end{definition}
Equivalently, $\Gamma'$ refines $\Gamma$ if there is a link function $\psi:\range\Gamma'\to\range\Gamma$ such that $\Gamma'_{r'} \subseteq \Gamma_{\psi(r')}$ for all $r'\in\range\Gamma'$.

\begin{definition}
  \label{def:el}
  For $k\in\mathbb{N}\cup\{\infty\}$, let $\EL_k(\P)$ denote the class of all elicitable properties $\Gamma:\P\to\reals^k$, and $\EL(\P) \defeq \bigcup_{k\in\mathbb{N}\cup\{\infty\}} \EL_k(\P)$.
  When $\P$ is implicit we simply write $\EL$.
\end{definition}

\begin{definition}
  \label{def:elic-complex}
  Let $\C$ be a class of properties.
  The \emph{elicitation complexity} of a property $\Gamma$ with respect to $\C$, denoted  $\elic_\C(\Gamma)$, is the minimum value of $k\in\mathbb{N}\cup\{\infty\}$ such that there exists $\hat \Gamma \in \C\cap\EL_k(\P)$ that refines $\Gamma$.
\end{definition}

\subsection{Supporting statements}

\begin{proposition}[\citet{osband1985providing}]
  \label{prop:elic-complex-level-sets-convex}
  Let $\Gamma$ be elicitable.
  Then $\Gamma_r$ is convex for all $r\in\range\Gamma$.
\end{proposition}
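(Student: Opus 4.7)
The plan is to prove convexity of $\Gamma_r$ directly from the definition of elicitation, using linearity of expectation in the distribution. Fix $r \in \range\Gamma$ and pick any two distributions $p_0, p_1 \in \Gamma_r$. For $\lambda \in [0,1]$, set $p_\lambda = (1-\lambda)p_0 + \lambda p_1$; this lies in $\P$ because $\P \subseteq \simplex$ is assumed convex throughout the paper. The goal is then to show $r \in \argmin_{r' \in \R} \exploss{L}{r'}{p_\lambda}$, which by the elicitation identity~\eqref{eq:elic} is exactly $p_\lambda \in \Gamma_r$.

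The key step is the observation that the expected loss is linear in $p$. Concretely, for any fixed $r' \in \R$,
\begin{equation*}
\exploss{L}{r'}{p_\lambda} = (1-\lambda)\,\exploss{L}{r'}{p_0} + \lambda\,\exploss{L}{r'}{p_1}.
\end{equation*}
Since $p_0, p_1 \in \Gamma_r$, the elicitation hypothesis gives $\exploss{L}{r}{p_i} \leq \exploss{L}{r'}{p_i}$ for $i \in \{0,1\}$ and every $r' \in \R$. Taking the convex combination of these two inequalities with weights $(1-\lambda,\lambda)$ and invoking linearity on both sides yields $\exploss{L}{r}{p_\lambda} \leq \exploss{L}{r'}{p_\lambda}$. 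Since this holds for every $r' \in \R$, we conclude $r \in \argmin_{r'} \exploss{L}{r'}{p_\lambda}$, i.e.\ $p_\lambda \in \Gamma_r$.

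There is essentially no obstacle here: the argument is a one-line consequence of expectations being linear functionals on distributions, so the infimum of an arbitrary family of linear functions in $p$ (each parametrized by a report $r'$) is concave in $p$, while each level set $\{p : r \in \argmin \text{...}\}$ is cut out by the linear inequalities $\exploss{L}{r}{p} \leq \exploss{L}{r'}{p}$ ranging over $r' \in \R$, each of which defines a convex half-space in $\P$. The only thing to be careful about is stating that we do not need any regularity (continuity, finiteness of $\R$, etc.) beyond what is already built into $L\in\L$; the argument is purely algebraic and uses only the definition of elicitation and linearity of expectation.
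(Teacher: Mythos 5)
Your proof is correct, and it is the standard argument from the property-elicitation literature. The paper itself does not supply a proof — it states this as a proposition attributed to \citet{osband1985providing} — so there is no in-text proof to compare against; your argument is essentially the one in the cited source. The core observations (the map $p \mapsto \E_p L(r',Y)$ is linear, and the level set $\Gamma_r$ is the intersection of $\P$ with the half-spaces $\{p : \E_p L(r,Y) \le \E_p L(r',Y)\}$ over all $r' \in \R$) are exactly right. One small remark: the aside about the Bayes risk being concave, while true, is not needed for the level-set claim and slightly conflates two facts; the chain ``convex combination of inequalities, conclude $r \in \argmin$'' is the whole proof.
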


\begin{lemma}[{Set-valued extension of \citet[Lemma 4]{frongillo2020elicitation}}]
  \label{lem:refine}
  If $\Gamma'$ refines $\Gamma$ then $\elic_\C(\Gamma') \geq \elic_\C(\Gamma)$.
\end{lemma}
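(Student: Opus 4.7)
The plan is to reduce the lemma to the transitivity of the refinement relation. Fix $k \defeq \elic_\C(\Gamma')$; the case $k = \infty$ is vacuous, so assume $k < \infty$. By Definition~\ref{def:elic-complex}, there is some $\hat\Gamma \in \C \cap \EL_k(\P)$ which refines $\Gamma'$. I will argue that this same $\hat\Gamma$ also refines $\Gamma$, which by Definition~\ref{def:elic-complex} forces $\elic_\C(\Gamma) \leq k = \elic_\C(\Gamma')$, yielding the lemma.

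The transitivity step is essentially the whole content. I would take an arbitrary $\hat r \in \range\hat\Gamma$ and unfold Definition~\ref{def:refine} twice. First, since $\hat\Gamma$ refines $\Gamma'$, there exists some $r' \in \range\Gamma'$ with $\hat\Gamma_{\hat r} \subseteq \Gamma'_{r'}$. Second, since $\Gamma'$ refines $\Gamma$ by hypothesis, applying the definition at this specific $r'$ produces some $r \in \range\Gamma$ with $\Gamma'_{r'} \subseteq \Gamma_r$. Chaining the two set inclusions gives $\hat\Gamma_{\hat r} \subseteq \Gamma_r$, which is exactly the refinement condition for $\hat\Gamma$ and $\Gamma$. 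Equivalently, using the link-function form of Definition~\ref{def:refine}, one composes the two links $\psi_1 : \range\hat\Gamma \to \range\Gamma'$ and $\psi_2 : \range\Gamma' \to \range\Gamma$ to exhibit $\psi_2 \circ \psi_1$ as a witnessing link.

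There is no real obstacle here; the lemma is a direct definitional unpacking. The only mild point of care is that the intermediate report $r'$ is chosen as a function of $\hat r$, and only then is the second refinement applied to that particular $r'$, rather than attempting to select a uniform $r \in \range\Gamma$ in advance. Because the refinement condition is stated pointwise on $\range\hat\Gamma$, this pointwise composition suffices, and no additional structural assumption on $\C$, $\P$, or the elicitability class is needed.
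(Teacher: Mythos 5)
Your proposal is correct and is essentially the paper's own argument: both establish transitivity of refinement by composing the two links (equivalently, chaining the pointwise inclusions $\hat\Gamma_{\hat r} \subseteq \Gamma'_{r'} \subseteq \Gamma_r$) and then conclude $\elic_\C(\Gamma) \leq \elic_\C(\Gamma')$ from the definition. No gaps.
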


\begin{proof}
  As $\Gamma'$ refines $\Gamma$, we have some $\psi:\range\Gamma'\to\range\Gamma$ such that for all $r'\in \range\Gamma'$ we have $\Gamma'_{r'} \subseteq \Gamma_{\psi(r')}$.
  Suppose we have $\hat\Gamma\in\C$ and $\varphi:\range\hat\Gamma\to\range\Gamma'$ such that for all $u\in \range\hat\Gamma$ we have $\hat\Gamma_u \subseteq \Gamma'_{\varphi(u)}$.
  Then for all $u\in \range\hat\Gamma$ we have $\hat\Gamma_u \subseteq \Gamma'_{\varphi(u)} \subseteq \Gamma_{(\psi \circ \varphi)(u)}$.
  In particular, if $\elic_\C(\Gamma') = m$, then we have such a $\hat\Gamma:\P\toto \reals^m$, and hence $\elic_\C(\Gamma) \leq m$.
\end{proof}

\begin{lemma}[{\citet[Lemma 8]{frongillo2020elicitation}}]
  \label{lem:elic-complex-bayes-concave}
  Suppose $L \in \L$ elicits $\Gamma:\P\to\R$ and has Bayes risk $\lbar$.
  Then for any $p,p'\in\P$ with $\Gamma(p)\neq\Gamma(p')$, we have $\lbar(\lambda p + (1-\lambda) p') > \lambda \lbar(p) + (1-\lambda) \lbar(p')$ for all $\lambda\in(0,1)$.
\end{lemma}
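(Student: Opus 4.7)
The plan is to combine linearity of the expected loss in the distribution with the fact that $\Gamma$, as the elicited property, consists exactly of the minimizers of the expected loss. Write $p_\lambda := \lambda p + (1-\lambda) p'$. For any fixed $r \in \R$, the map $q \mapsto \E_q L(r,Y)$ is linear, so $\E_{p_\lambda} L(r, Y) = \lambda \E_p L(r,Y) + (1-\lambda) \E_{p'} L(r,Y)$. Taking an infimum of linear functions gives weak concavity of $\lbar$ for free; the work is to promote this to strict concavity under $\Gamma(p) \neq \Gamma(p')$.

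The first step is to pick a witness $r^* \in \Gamma(p_\lambda)$. This is legitimate because $L \in \L$ guarantees $\argmin_r \E_{p_\lambda} L(r,Y)$ is nonempty, and since $L$ elicits $\Gamma$, that argmin equals $\Gamma(p_\lambda)$. By definition, $\lbar(p_\lambda) = \E_{p_\lambda} L(r^*, Y)$. Using linearity,
\begin{equation*}
\lbar(p_\lambda) \;=\; \lambda \E_p L(r^*,Y) + (1-\lambda)\E_{p'} L(r^*,Y) \;\geq\; \lambda\lbar(p) + (1-\lambda)\lbar(p'),
\end{equation*}
since each summand dominates the corresponding Bayes risk by definition of $\lbar$.

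The second step isolates where equality must fail. Because the statement writes $\Gamma:\P \to \R$, the property is single-valued, so $\Gamma(p)$ is the \emph{unique} minimizer of $\E_p L(\cdot, Y)$; hence for any $r \neq \Gamma(p)$ one has the strict inequality $\E_p L(r, Y) > \lbar(p)$, and analogously for $p'$. Since $\Gamma(p) \neq \Gamma(p')$, the single value $r^*$ cannot equal both, so without loss of generality $r^* \neq \Gamma(p)$ (otherwise swap the roles of $p$ and $p'$, which only swaps $\lambda$ with $1-\lambda$, both positive). Then $\E_p L(r^*, Y) > \lbar(p)$, while $\E_{p'} L(r^*, Y) \geq \lbar(p')$ holds trivially. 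Plugging these two into the displayed identity and using $\lambda \in (0,1)$ converts the $\geq$ into $>$, yielding the claim.

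The only subtle point is the WLOG selection of $r^*$: the argument genuinely uses the single-valuedness of $\Gamma$ to upgrade ``$r^*$ is some minimizer at $p_\lambda$'' into ``$r^*$ is not the minimizer at (say) $p$.'' For a set-valued extension one would instead need to argue that $\Gamma(p_\lambda) \not\subseteq \Gamma(p) \cap \Gamma(p')$; in that case the symmetric difference would need to be exploited, which is slightly more delicate but not fundamentally different. As the statement here is single-valued, no such complication arises, and this is the entirety of the proof.
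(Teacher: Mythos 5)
Your proof is correct and follows essentially the same argument as the one the paper relies on (the paper cites \citet[Lemma 8]{frongillo2020elicitation} rather than reproducing the proof): take the optimal report $r^*$ for the mixture, use linearity of $q \mapsto \E_q L(r^*,Y)$ in the distribution, and observe that single-valuedness together with $\Gamma(p)\neq\Gamma(p')$ forces $r^*$ to be strictly suboptimal at one endpoint, which with $\lambda\in(0,1)$ upgrades the concavity inequality to a strict one. No gaps.
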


\begin{lemma}[{Adapted from \citet[Theorem 4]{frongillo2020elicitation}}]
  \label{lem:bayes-risk-lower-bound}
  If $L$ elicits a single-valued $\Gamma$, and $\hat\Gamma$ refines $\lbar$, then $\hat\Gamma$ refines $\Gamma$.
\end{lemma}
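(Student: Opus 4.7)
The plan is a proof by contradiction that leverages strict concavity of the Bayes risk along level sets of an elicitable property. Suppose $\hat\Gamma$ (which we take to be elicitable, as needed for refinements in $\C\cap\EL$) refines $\lbar$ but does not refine $\Gamma$. Then there is some $u \in \range\hat\Gamma$ and $p, p' \in \hat\Gamma_u$ with $\Gamma(p) \neq \Gamma(p')$, since otherwise $\Gamma$ would be constant on $\hat\Gamma_u$ and $\hat\Gamma_u \subseteq \Gamma_{\Gamma(p)}$ would witness refinement.

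Next I would use that $\hat\Gamma$ refines $\lbar$ to deduce $\lbar$ is constant on $\hat\Gamma_u$: by Definition~\ref{def:refine}, there exists $c \in \range\lbar$ with $\hat\Gamma_u \subseteq \lbar^{-1}(c)$, hence $\lbar(p) = \lbar(p') = c$. Then, since $\hat\Gamma$ is elicitable, Proposition~\ref{prop:elic-complex-level-sets-convex} gives that $\hat\Gamma_u$ is convex, so $p_\lambda := \lambda p + (1-\lambda) p'$ lies in $\hat\Gamma_u$ for every $\lambda \in (0,1)$, and therefore $\lbar(p_\lambda) = c$ as well.

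Finally, I would invoke strict concavity of $\lbar$ along directions where $\Gamma$ changes value: since $L$ elicits the single-valued property $\Gamma$ and $\Gamma(p) \neq \Gamma(p')$, Lemma~\ref{lem:elic-complex-bayes-concave} yields
\[
\lbar(p_\lambda) \;>\; \lambda \lbar(p) + (1-\lambda) \lbar(p') \;=\; c,
\]
contradicting $\lbar(p_\lambda) = c$. Hence no such $p, p'$ exist, $\Gamma$ is constant on each $\hat\Gamma_u$, and thus $\hat\Gamma$ refines $\Gamma$.

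The only real subtlety I foresee is ensuring we are entitled to treat $\hat\Gamma_u$ as a convex subset of $\P$ and to pick an arbitrary midpoint $p_\lambda$ inside it; this is the reason $\hat\Gamma$ must be assumed elicitable (so that Proposition~\ref{prop:elic-complex-level-sets-convex} applies) and $\P$ convex (so that $p_\lambda \in \P$). The use of single-valuedness of $\Gamma$ is exactly what allows the hypothesis $\Gamma(p)\neq \Gamma(p')$ of Lemma~\ref{lem:elic-complex-bayes-concave} to be stated cleanly; for set-valued $\Gamma$ one would need a more delicate condition, which is why the lemma is stated in the single-valued case.
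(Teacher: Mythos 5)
Your proposal is correct and follows essentially the same argument as the paper: assume non-refinement to obtain $p,p'\in\hat\Gamma_u$ with $\Gamma(p)\neq\Gamma(p')$, use refinement of $\lbar$ to get $\lbar(p)=\lbar(p')$, then contradict this via convexity of $\hat\Gamma_u$ (Proposition~\ref{prop:elic-complex-level-sets-convex}) and the strict concavity from Lemma~\ref{lem:elic-complex-bayes-concave}. The only cosmetic difference is that the paper fixes $\lambda=1/2$ while you keep $\lambda$ general.
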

\begin{proof}
  Suppose for a contradiction that $\hat\Gamma$ does not refine $\Gamma$.
  Then we have some $u\in\range\hat\Gamma$ such that for all $r\in\range\Gamma$ we have $\hat\Gamma_u \not\subseteq \Gamma_r$.
  In particular, recalling that $\Gamma$ is single-valued, we must have $p,p'\in\hat\Gamma_u$ such that $\Gamma(p) \neq \Gamma(p')$.
  Moreover, as $\hat\Gamma$ refines $\lbar$, we also have $\lbar(p) = \lbar(p')$.
  From Lemma~\ref{lem:elic-complex-bayes-concave} and $\lambda=1/2$ we have $\lbar(q) > \tfrac 1 2 \lbar(p) + \tfrac 1 2 \lbar(p') = \lbar(p)$, where $q = \tfrac 1 2 p + \tfrac 1 2 p'$.
  As the level set $\hat\Gamma_u$ is convex by Proposition~\ref{prop:elic-complex-level-sets-convex}, we also have $q \in \hat\Gamma_u$, and hence $\lbar(q)=\lbar(p)$, a contradiction.
\end{proof}

\begin{lemma}[Minor modifications from \citet{frongillo2020elicitation}]\label{lem:lin-alg-nested-level-sets}
  Let $\V$ be a real vector space.
  Let $f:\V\to\reals^k$ be linear and $C\subseteq \V$ convex with $\spn C = \V$, and let $m\in\mathbb{N}$.
  Suppose that $0 \in \interior f(C)$, and
  for all $v\in S \defeq C \cap \ker f$, there exists a linear $\hat f_v:\V\to\reals^m$ with $v \in C \cap \ker \hat f_v \subseteq S$.
  Then $m \geq k$.
  If $m=k$, we additionally have $0\in\interior\hat f_v(C)$ for some $v\in S$.
\end{lemma}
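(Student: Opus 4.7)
The plan is to fix a well-chosen $v\in S$, assemble the joint linear map $\Phi_v\defeq(f,\hat f_v):\V\to\reals^{k+m}$, and analyze $K_v\defeq\Phi_v(C)\subseteq\reals^{k+m}$. The hypothesis will force the horizontal slice $K_v\cap(\reals^k\times\{0\})$ to collapse to a single point, while $0\in\interior f(C)$ makes $K_v$ spread in all $k$ directions of the first factor; reconciling these inside $\reals^{k+m}$ will yield $m\geq k$.

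First I would choose $v$. Since $0\in\interior f(C)$ and $f(C)\subseteq\reals^k$ has nonempty interior, $\interior f(C)=\relint{f(C)}$; the identity $\relint{L(C)}=L(\relint{C})$ for linear $L$ and convex $C$, applied to $L=f$, produces some $v\in\relint{C}$ with $f(v)=0$. The intersection rule $\relint{C\cap\ker f}=\relint{C}\cap\ker f$, valid since $\relint{C}\cap\ker f\ne\emptyset$ and $\ker f$ is a subspace, places $v$ in $\relint{S}\subseteq S$. The hypothesis then supplies a linear $\hat f_v$ with $v\in C\cap\ker\hat f_v\subseteq S$.

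Next I would pin down the structure of $K_v$. On one hand, any $(y,0)=(f(c),\hat f_v(c))$ with $c\in C$ forces $c\in C\cap\ker\hat f_v\subseteq\ker f$, so $y=0$; hence $K_v\cap(\reals^k\times\{0\})\subseteq\{(0,0)\}$. On the other hand, another application of the commutation with $L=\Phi_v$ and $v\in\relint{C}$ gives $(0,0)=\Phi_v(v)\in\relint{K_v}$. Thus $W\defeq\affhull(K_v)$ is a linear subspace of $\reals^{k+m}$, and $K_v$ contains a relative neighborhood of $(0,0)$ in $W$; intersecting that neighborhood with $\reals^k\times\{0\}$ produces a neighborhood of $(0,0)$ in the linear subspace $W\cap(\reals^k\times\{0\})$ contained in $\{(0,0)\}$, which forces $W\cap(\reals^k\times\{0\})=\{(0,0)\}$.

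The dimension count in $\reals^{k+m}$ then finishes the main claim. The projection $\pi_1(W)$ is a subspace of $\reals^k$ containing $f(C)$, which contains an open ball around $0$, so $\pi_1(W)=\reals^k$ and $\dim W\geq k$. The transversality $W\cap(\reals^k\times\{0\})=\{0\}$ together with $\dim(\reals^k\times\{0\})=k$ gives $\dim W+k\leq k+m$, so $\dim W\leq m$, establishing $m\geq k$. For the ``additionally'' clause, when $m=k$ we have $\dim W=k$ and $\pi_1|_W$ is a linear isomorphism onto $\reals^k$, so $W$ is the graph of some linear isomorphism $\psi:\reals^k\to\reals^k$; then $\hat f_v(C)=\pi_2(K_v)=\psi(f(C))$, and since $\psi$ is a homeomorphism and $0\in\interior f(C)$, we conclude $0\in\interior\hat f_v(C)$. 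The step I expect to be the main obstacle is arranging $(0,0)\in\relint{K_v}$, since it is what upgrades the pointwise transversality $K_v\cap(\reals^k\times\{0\})=\{(0,0)\}$ to transversality of the full subspaces $W$ and $\reals^k\times\{0\}$; after that, the rest is routine linear-algebra bookkeeping in the finite-dimensional space $\reals^{k+m}$.
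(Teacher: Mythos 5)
Your argument is a genuinely different route from the paper's (a transversality/dimension count on the joint image $K_v=\Phi_v(C)\subseteq\reals^{k+m}$, versus the paper's exhibition of $k+1$ affinely independent points $\hat f_v(v_i)$ via uniqueness of barycentric coordinates), and the linear-algebra bookkeeping in $\reals^{k+m}$ is correct. But there is a genuine gap at exactly the step you flagged as the crux: producing $v\in\relint{C}$ with $f(v)=0$, and then $(0,0)\in\relint{K_v}$. The identities $\relint{L(C)}=L(\relint{C})$ and $\relint{C_1\cap C_2}=\relint{C_1}\cap\relint{C_2}$ are theorems about convex sets in \emph{finite-dimensional} spaces. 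The lemma is stated for an arbitrary real vector space $\V$, and it is applied in the proof of Theorem~\ref{thm:bayes-risk-lower-bound} with $\V=\spn\P$ for $\P$ a set of Lebesgue densities on $\reals$ --- an infinite-dimensional setting in which the (algebraic) relative interior of $C=\P$ is typically \emph{empty}. (E.g.\ for any density $p$ one can find $q\in\P$ with $q/p$ unbounded, so $(1+\epsilon)p-\epsilon q$ fails to be nonnegative for every $\epsilon>0$; hence no point of $\P$ is in its intrinsic core.) In that case $L(\relint{C})=\emptyset$ while $\relint{f(C)}\ni 0$, so the identity you invoke is simply false there, and your choice of $v$ does not exist.

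The fix is local and brings you close to the paper's proof. Since $0\in\interior f(C)$, pick $v_1,\dots,v_{k+1}\in C$ with $0\in\interior\conv\{f(v_i)\}$, and replace $C$ throughout by the finite-dimensional convex set $C'\defeq\conv\{v_1,\dots,v_{k+1}\}$: take $v=\sum_i\alpha_iv_i$ with $\alpha_i>0$ the barycentric coordinates of $0$, so that $v\in\relint{C'}\cap\ker f\subseteq S$, obtain $\hat f_v$ from the hypothesis, and run your entire argument with $K_v'=\Phi_v(C')$; every step (the slice $K_v'\cap(\reals^k\times\{0\})\subseteq\{0\}$, $(0,0)\in\relint{K_v'}$, $\pi_1(\affhull{(K_v')})=\reals^k$, the dimension count, and $0\in\interior\hat f_v(C')\subseteq\interior\hat f_v(C)$ when $m=k$) goes through verbatim because $C'$ lives in a finite-dimensional subspace. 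Once this repair is made, your transversality formulation is arguably cleaner conceptually than the paper's; the paper's barycentric-coordinate computation is essentially the hands-on version of the statement that $v$ lies in $\relint{C'}$ and that perturbations $v+\lambda v'$ within $C'$ remain in $C'$ for small $\lambda$.
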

\begin{proof}
  The condition $0 \in \interior f(C)$ is equivalent to the existence of some $v_1,\ldots v_{k+1} \in C$ such that $0\in\interior\conv\{f(v_i) : i\in\{1,\ldots,k+1\}\}$.
  Let $\alpha_1,\ldots,\alpha_{k+1}>0$, $\sum_{i=1}^{k+1} \alpha_i = 1$, such that $\sum_{i=1}^{k+1} \alpha_i f(v_i) = 0$.
  As these are barycentric coordinates,
  this choice of $\alpha_i$ is unique, a fact which will be important later.
  We will take $v = \sum_{i=1}^{k+1} \alpha_i v_i$, an element of $C$ by convexity, and thus an element of $S$ as $f(v)=0$.

  Let $\hat f_v:\V\to\reals^m$ be linear with $v \in \hat S := C \cap \ker \hat f_v \subseteq S$.
  Let $\beta_1,\ldots,\beta_{k+1}\in\reals$, $\sum_{i=1}^{k+1} \beta_i = 0$, such that $\sum_{i=1}^{k+1} \beta_i \hat f_v(v_i) = 0$.
  We will show that the $\beta_i$ must be identically zero, i.e. that $\{\hat f_v(v_i):i\in\{1,\ldots,k+1\}\}$ are affinely independent.
  By construction, $v' := \sum_{i=1}^{k+1} \beta_iv_i \in \ker \hat f_v$, and as $v\in\ker\hat f_v$, for all $\lambda > 0$ we have $v_\lambda := v + \lambda v' = \sum_{i=1}^{k+1} (\alpha_i + \lambda \beta_i) v_i \in \ker \hat f_v$.
  Taking $\lambda$ sufficiently small, we have $\gamma_i := \alpha_i + \lambda \beta_i > 0$ for all $i$, and $\sum_{i=1}^{k+1} \gamma_i = \sum_{i=1}^{k+1} \alpha_i + \lambda\sum_{i=1}^{k+1} \beta_i = 1$.
  By convexity of $C$, we have $v_\lambda \in C$.
  Now $v_\lambda \in C \cap \ker \hat f_v \subseteq S = C \cap \ker f$, and in particular $v_\lambda \in \ker f$.
  Thus, $f(v_\lambda) = \sum_{i=1}^{k+1} \gamma_i f(v_i) = 0$.
  By the uniqueness of barycentric coordinates, for all $i\in\{1,\ldots,k+1\}$, we must have $\gamma_i = \alpha_i$ and thus $\beta_i = 0$, as desired.

  As $\hat f_v(C)$ contains $k+1$ affinely independent points, we have $m \geq \dim \im \hat f_v \geq k$.
  When $m=k$,
  by affine independence, the set $\conv\{\hat f_v(v_i) : i\in\{1,\ldots,k+1\}\}$ has dimension $k$ in $\reals^k$.
  As $0 = \hat f_v(v) = \sum_{i=1}^{k+1} \alpha_i \hat f_v(v_i)$, and $\alpha_i > 0$ for all $i$, we conclude $0\in\interior\conv\{\hat f_v(v_i) : i\in\{1,\ldots,k+1\}\} \subseteq \interior \hat f_v(C)$.
\end{proof}

\begin{lemma}[{\citet[Lemma 14]{frongillo2020elicitation}}]
  \label{lem:lin-alg-span}
  Let $\V$ be a real vector space.
  Let $f:\V\to\reals^k$ be linear, $C\subseteq \V$ convex with $\spn C = \V$, and let $S = C \cap \ker f$.
  If $0 \in \interior f(C)$ then $\spn S = \ker f$.
\end{lemma}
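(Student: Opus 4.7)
The plan is to reduce to a quotient and then run a convex-cone argument. Let $W := \spn S$, so that $W \subseteq \ker f$. The inclusion $\spn S \subseteq \ker f$ is immediate; the task is the reverse. Passing to $\bar \V := \V / W$, with induced linear map $\bar f : \bar \V \to \reals^k$ (well-defined since $W \subseteq \ker f$) and image $\bar C := \{c + W : c \in C\}$, the statement becomes $\ker \bar f = 0$. All hypotheses transfer: $\bar C$ is convex, $\spn \bar C = \bar \V$, and $\bar f(\bar C) = f(C)$ so $0 \in \interior \bar f(\bar C)$. Crucially, the quotient forces $\bar C \cap \ker \bar f = \{0\}$: any lift $c$ of an element of this intersection lies in $C \cap \ker f = S \subseteq W$, hence projects to $0$ in $\bar \V$.

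The key step, which I expect to be the main obstacle, is to upgrade this intersection condition to a local symmetry of $\bar C$: for every nonzero $\bar c \in \bar C$, there exists $\lambda > 0$ with $-\lambda \bar c \in \bar C$. Because $0 \in \interior \bar f(\bar C)$, for sufficiently small $\lambda > 0$ the point $-\lambda \bar f(\bar c)$ lies in $\bar f(\bar C)$, so we can choose $\bar c' \in \bar C$ with $\bar f(\bar c') = -\lambda \bar f(\bar c)$. Then $\bar d := \tfrac{\lambda}{1+\lambda} \bar c + \tfrac{1}{1+\lambda} \bar c'$ lies in $\bar C$ by convexity and satisfies $\bar f(\bar d) = 0$, so by the intersection property $\bar d = 0$. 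Rearranging yields $\bar c' = -\lambda \bar c \in \bar C$.

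The remainder is routine. Let $K$ denote the convex conic hull of $\bar C$. By the previous step, $-\bar c = (1/\lambda)(-\lambda \bar c) \in K$ whenever $\bar c \in \bar C$, so $K$ contains both $\bar C$ and $-\bar C$; since $K$ is closed under nonnegative combinations, it contains every signed linear combination of elements of $\bar C$, giving $K \supseteq \spn \bar C = \bar \V$ and hence $K = \bar \V$. Now for any $\bar v \in \ker \bar f$, write $\bar v = \sum_i \mu_i \bar d_i$ with $\mu_i \geq 0$ and $\bar d_i \in \bar C$. Either $\bar v = 0$ and we are done, or $M := \sum_i \mu_i > 0$, in which case $\bar v / M$ is a convex combination of the $\bar d_i$, hence lies in $\bar C$, and still satisfies $\bar f(\bar v/M) = 0$; by the intersection property $\bar v/M = 0$, forcing $\bar v = 0$. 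This proves $\ker \bar f = 0$, i.e. $\spn S = \ker f$.
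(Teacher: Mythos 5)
Your proof is correct. Note that the paper itself does not prove Lemma~\ref{lem:lin-alg-span}; it imports it verbatim from \citet{frongillo2020elicitation}, so there is no in-paper argument to compare against, and a self-contained proof like yours is a genuine addition. Your argument is sound at every step: the quotient by $W=\spn S$ is legitimate because $W\subseteq\ker f$, the hypotheses all descend to $\bar\V$ (in particular $\bar f(\bar C)=f(C)$, and $0\in\bar C$ since $0\in\interior f(C)$ forces $S\neq\emptyset$), and the quotient converts the target statement into the cleaner claim $\ker\bar f=\{0\}$. The engine of the proof --- using $0\in\interior f(C)$ to find, for each $\bar c\in\bar C$, some $\bar c'\in\bar C$ with $\bar f(\bar c')=-\lambda\bar f(\bar c)$, and then observing that the convex combination $\tfrac{\lambda}{1+\lambda}\bar c+\tfrac{1}{1+\lambda}\bar c'$ lands in $\bar C\cap\ker\bar f=\{0\}$ --- is exactly the right use of the interiority hypothesis, and it is the same barycentric-perturbation idea that drives the companion Lemma~\ref{lem:lin-alg-nested-level-sets} in the appendix. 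The closing cone argument ($K\supseteq\bar C\cup(-\bar C)$ implies $K=\spn\bar C=\bar\V$, and any $\bar v\in\ker\bar f$ rescales into $\bar C\cap\ker\bar f$) is routine and correct, including the separate handling of the degenerate case $M=0$. One small presentational point: it is worth stating explicitly, as you implicitly use, that $\ker\bar f=\pi(\ker f)=\ker f/W$, so that $\ker\bar f=\{0\}$ indeed yields $\ker f\subseteq W$.
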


\subsection{Proving the lower bound for spectral risks}

Let $\C^*_d$ be the class of properties $\Gamma$ which are elicited by a convex loss $L\in\Lcvx_d$ for some $d\in\mathbb{N}$, and let $\C^*\defeq \bigcup_{d\in\mathbb{N}} \C^*_d$.
Then for all properties $\gamma$, if $\elic_{\C^*}(\gamma) < \infty$, we have $\elic_{\C^*}(\gamma) = \eliccvx(\gamma)$, a fact we use tacitly in the proof.

\bayesrisklowerbound*
\begin{proof}
  Let $V:\Y\to\reals^d$ and $r$ be given by the statement of the theorem and from Condition~\ref{cond:v-interior}.
  Let $m = \elic_{\C^*}(\lbar)$, so that we have $\hat\Gamma\in\C^*_m$ which refines $\lbar$.
  By Lemma~\ref{lem:bayes-risk-lower-bound} we have $\hat\Gamma$ refines $\Gamma$.

  We now establish the conditions of Lemma~\ref{lem:lin-alg-nested-level-sets} for $C=\P$.
  Let $f:\spn \P \to \reals^d$, $p \mapsto \E_pV$.
  From Condition~\ref{cond:v-interior}, we have $0\in\interior f(\P)$ and $\ker f \cap \P = \zeros{V} = \Gamma_r$.
  Now let $p\in\Gamma_r$ be arbitrary, and take any $u\in\hat\Gamma(p)$.
  As $\Gamma$ is single-valued, $r\in\range\Gamma$ is the unique value with $p\in\Gamma_r$.
  As $\hat\Gamma$ refines $\Gamma$, there exists $r'\in\range\Gamma$ with $\hat\Gamma_u \subseteq \Gamma_{r'}$, and since $p\in\hat\Gamma_u$, we conclude $r'=r$ from the above.
  From Lemma~\ref{lem:convex-flats-inf-dim}, we have some $\hat V_{u,p}$ with $p\in\zeros{\hat V_{u,p}} \subseteq \hat\Gamma_u \subseteq \Gamma_r = \zeros{V}$.
  Letting $\hat f_p:\spn \P \to \reals^d$, $p \mapsto \E_p\hat V_{u,p}$, we have now satisfied the conditions of Lemma~\ref{lem:lin-alg-nested-level-sets}.
  We conclude $m \geq d$, and moreover, if $m=d$, then there exists some $q\in\Gamma_r$ such that $0 \in\interior\hat f_q(\P)$.
    
  Now suppose $m = d$ for a contradiction.
  Let $\hat S\defeq \ker f_q\cap \P$.
  Applying Lemma~\ref{lem:lin-alg-span} to the functions $f$ and $\hat f_q$
  we have $\spn \ker f = \spn \Gamma_r$ and $\spn \ker \hat f_q = \spn \hat S$.
  As $\hat S \subseteq \Gamma_r$, we have $\ker \hat f_q = \spn \hat S \subseteq \spn \Gamma_r = \ker f$.
  By the first isomorphism theorem, we also have $\codim \ker \hat f_q = \codim \ker f = d$, as the images of these linear maps span all of $\reals^d$.
  By the third isomorphism theorem we conclude $\Gamma_r = \hat S$.
  Moreover, as $\hat S \subseteq \hat\Gamma_u \subseteq \Gamma_r$, we have $\hat S = \hat\Gamma_u = \Gamma_r$.

  We now see that $\lbar$ is constant on $\Gamma_r$ since there is some link function $\psi:\reals^m\to\reals$ such that $\Gamma_r = \hat\Gamma_u \subseteq \lbar_{\psi(u)}$, meaning $\lbar(p) = \psi(u)$ for all $p\in\Gamma_r$.
  This statement contradicts the assumption that $\lbar$ is non-constant on $\Gamma_r$.
\end{proof}

\section{Miscellaneous omitted proofs}\label{app:misc-omitted-proofs}

\setcounter{theorem}{6}
\consistentlossimpliesprop*
\begin{proof}
	First, observe that $\propdis(r,p) = 0 \iff \exploss{\ell}{r}{p} = \inf_{r' \in \R} \exploss{\ell}{r'}{p} \iff r \in \gamma(p)$.
	Now suppose $(L, \psi)$ are consistent with respect to $\ell$, and take any sequence $\{f_m\}$ of measurable hypotheses.
	Rewriting the right-hand side of Definition~\ref{def:consistent-ell},
	\begin{align}
	&\; \E_D \ell(\psi \circ f_m(X), Y)\to \inf\nolimits_f \E_D \ell(\psi \circ f(X), Y)   \label{eqn:cons-loss-cond} \\
	&\iff \E_X R_\ell(\psi \circ f_m(X), D_X) \to 0                               \nonumber  \\
	&\iff \E_X \propdis(\psi \circ f_m(X), D_X) \to 0~.~                          \label{eqn:cons-prop-cond}
	\end{align}
	Therefore, $\mathbb{E}_D L(f_m(X),Y) \to \inf_f \mathbb{E}_D L(f(X),Y)$ implies (\ref{eqn:cons-loss-cond}) if and only if it implies (\ref{eqn:cons-prop-cond}).
	Observe that the assumptions on $\L$ allow us to apply the Fubini-Tonelli Theorem~\cite[Theorem 2.37]{folland1999real}, which yields the equivalence of eq.~\ref{eqn:cons-loss-cond} to the next line.
\end{proof}
\setcounter{theorem}{37}

A hyperplane weakly separates two sets if its two closed halfspaces respectively contain the two sets.
\begin{lemma}\label{lem:intersect-levelsets}
	If $\gamma: \P \toto \R$ is an elicitable property, then for any pair of predictions $r, r' \in \R$ where $\gamma_r \neq \gamma_{r'}$, there is a hyperplane $H = \{x \in \reals^{\Y} : v \cdot x = 0\}$, for some $v \in \reals^\Y$, that weakly separates $\gamma_r$ and $\gamma_{r'}$ and has $\gamma_r \cap H = \gamma_{r'} \cap H = \gamma_r \cap \gamma_{r'}$.
\end{lemma}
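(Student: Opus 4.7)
}
The plan is to read the separating hyperplane directly off of a loss function that elicits $\gamma$. Since $\gamma$ is elicitable, fix any loss $\ell:\R\times\Y\to\reals$ with $\gamma = \prop[\P]{\ell}$, and define $v:\Y\to\reals$ by $v(y) := \ell(r,y) - \ell(r',y)$. Viewing distributions as vectors in $\reals^\Y$ (the statement is naturally about finite $\Y$), we have $v\cdot p = \E_p\ell(r,Y) - \E_p\ell(r',Y)$. I will take $H := \{x\in\reals^\Y : v\cdot x = 0\}$.

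For the separation, first observe that for any $p\in\gamma_r$ the report $r$ minimizes $\E_p\ell(\cdot,Y)$, so $\E_p\ell(r,Y) \leq \E_p\ell(r',Y)$, i.e.\ $v\cdot p \leq 0$; symmetrically $v\cdot p \geq 0$ for any $p\in\gamma_{r'}$. Hence the two closed half-spaces bounded by $H$ respectively contain $\gamma_r$ and $\gamma_{r'}$.

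For the coincidence of intersections, the inclusion $\gamma_r\cap\gamma_{r'} \subseteq \gamma_r\cap H$ (and the symmetric one) is immediate: if both $r$ and $r'$ are minimizers at $p$, then $\E_p\ell(r,Y)=\E_p\ell(r',Y)$, so $v\cdot p=0$. Conversely, if $p\in\gamma_r\cap H$, then $r$ is a minimizer and $\E_p\ell(r,Y) = \E_p\ell(r',Y)$, so $r'$ is also a minimizer, placing $p\in\gamma_{r'}$; this yields $\gamma_r\cap H \subseteq \gamma_r\cap\gamma_{r'}$, and the same argument with $r,r'$ swapped handles $\gamma_{r'}\cap H$.

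The only potential subtlety is the degenerate case $v = 0$, which would make $H$ the whole space and break the second claim whenever $\gamma_r\neq\gamma_{r'}$. This case is ruled out: if $\ell(r,\cdot) \equiv \ell(r',\cdot)$, then $\E_p\ell(r,Y) = \E_p\ell(r',Y)$ for every $p\in\P$, and since $\ell$ elicits $\gamma$ we would obtain $\gamma_r = \gamma_{r'}$, contradicting the hypothesis. So $v\neq 0$, $H$ is a genuine hyperplane, and the proof is complete. The main point requiring care is precisely this non-triviality check; otherwise the argument is a direct two-line consequence of elicitability.
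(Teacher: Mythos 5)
Your proof takes essentially the same route as the paper's: both define $v(y) := \ell(r,y) - \ell(r',y)$ from a loss eliciting $\gamma$, take $H = \{x : v\cdot x = 0\}$, and derive the weak separation and intersection identities directly from the definition of elicitation. The one difference is that you explicitly verify $v \neq 0$ (arguing that $v = 0$ would force $\gamma_r = \gamma_{r'}$), whereas the paper merely asserts $v$ is nonzero without justification; your addition is a small but genuine tightening.
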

\begin{proof}
	Let $\ell$ elicit $\gamma$.
	Let $v = \ell(r, \cdot) - \ell(r', \cdot)$, interpreted as a nonzero vector in $\reals^\Y$.
	Let $H = \{ q : v \cdot q = 0 \}$.
	If $v \cdot q < 0$, then $r'$ cannot be optimal, so $q \not\in \gamma_{r'}$.
	So $\gamma_{r'} \subseteq \{ q : v \cdot q \geq 0 \}$.
	Symmetrically, $\gamma_r \subseteq \{ q : v \cdot q \leq 0 \}$.
	This is weak separation, and it immediately implies that $\gamma_r \cap \gamma_{r'} \subseteq H$.
	Finally, if and only if $v \cdot q = 0$, i.e. $q \in H$, by definition the expected losses of both reports are the same.
	So $q \in \gamma_r \cap H \iff q \in \gamma_{r'} \cap H$.
	This gives $\gamma_r \cap H = \gamma_{r'} \cap H = \gamma_r \cap \gamma_{r'} \cap H = \gamma_r \cap \gamma_{r'}$.
\end{proof}

\begin{lemma}\label{lem:set-valued-prop-flats}
	Suppose we are given an elicitable property $\gamma : \P \toto \R$, where $\Y$ is finite, and distribution $p \in \relint\P$ such that $p \in \gamma_r \cap \gamma_{r'}$ for $r,r' \in \R$.
	Then for any flat $F$ containing $p$, $F \subseteq \gamma_r \iff F \subseteq \gamma_{r'}$.
\end{lemma}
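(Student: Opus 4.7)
The plan is to combine Lemma~\ref{lem:intersect-levelsets} with the local symmetry that a flat enjoys around any point in the relative interior of $\P$. Since the claim is symmetric in $r$ and $r'$, it suffices to show the forward implication $F\subseteq\gamma_r \Rightarrow F\subseteq\gamma_{r'}$, and the reverse follows by swapping roles.

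First I would invoke Lemma~\ref{lem:intersect-levelsets} on the elicitable property $\gamma$ to obtain a hyperplane $H=\{x\in\reals^{\Y}:v\cdot x=0\}$ that weakly separates $\gamma_r$ and $\gamma_{r'}$, with the extra property $\gamma_r\cap H = \gamma_{r'}\cap H = \gamma_r\cap \gamma_{r'}$. In particular, $\gamma_r \subseteq \{x : v\cdot x \leq 0\}$ and $\gamma_{r'} \subseteq \{x : v\cdot x \geq 0\}$, and because $p\in\gamma_r\cap\gamma_{r'}$, the point $p$ lies on $H$.

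Next I would show that $F\subseteq\gamma_r$ forces $F\subseteq H$. Write $F = \zeros{W}$ for some $W:\Y\to\reals^d$, so $\E_p W = \vec 0$. Fix any $q\in F$; because $p\in\relint\P$, for sufficiently small $\epsilon>0$ the reflected point $q_\epsilon := p + \epsilon(p-q) = (1+\epsilon)p - \epsilon q$ lies in $\P$, and by linearity $\E_{q_\epsilon}W = (1+\epsilon)\E_pW - \epsilon\E_qW = \vec 0$, so $q_\epsilon\in F\subseteq\gamma_r$. Since $\gamma_r\subseteq\{x:v\cdot x\leq 0\}$, both $q$ and $q_\epsilon$ satisfy $v\cdot q\leq 0$ and $v\cdot q_\epsilon\leq 0$. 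But $v\cdot q_\epsilon = -\epsilon\, v\cdot q$ (using $v\cdot p = 0$), so $v\cdot q\leq 0$ and $-\epsilon\,v\cdot q \leq 0$ together force $v\cdot q = 0$, i.e.\ $q\in H$. As $q\in F$ was arbitrary, $F\subseteq H$.

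Combining, $F\subseteq \gamma_r\cap H = \gamma_r\cap\gamma_{r'} \subseteq \gamma_{r'}$, which is the desired implication; the converse is identical with $r$ and $r'$ swapped. The main conceptual step is the reflection argument in the middle paragraph: the fact that $p\in\relint\P$ lets me move a small step past $p$ inside $\P$, and the linearity of $\E_{(\cdot)}W$ guarantees that this step stays inside $F$, which is exactly what forces the flat to lie in the separating hyperplane rather than merely in the correct halfspace. Everything else is bookkeeping given Lemma~\ref{lem:intersect-levelsets}.
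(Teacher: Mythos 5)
Your proof is correct and follows essentially the same route as the paper's: invoke Lemma~\ref{lem:intersect-levelsets} for the separating hyperplane, then use the reflection $q\mapsto p+\epsilon(p-q)$ (valid because $p\in\relint\P$ and $F$ is a flat, which the paper packages as Lemma~\ref{lem:finite-relint-dim}(i)) to force $F\subseteq H$ and hence $F\subseteq\gamma_r\cap\gamma_{r'}\subseteq\gamma_{r'}$. The only cosmetic difference is that the paper argues by contradiction pointwise and separately dispatches the trivial case $\gamma_r=\gamma_{r'}$ (which Lemma~\ref{lem:intersect-levelsets} formally requires you to exclude before applying it).
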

\begin{proof}
	If $\gamma_r = \gamma_{r'}$, we are done.
	Otherwise, Lemma \ref{lem:intersect-levelsets} gives a hyperplane $H = \{ x \in \reals^\Y : v \cdot x = 0\}$ and a guarantee that $\gamma_r \subseteq \{ q \in \simplex : v \cdot q \leq 0\}$, while $\gamma_{r'} \subseteq \{ q \in \simplex : v \cdot q \geq 0 \}$, and finally $\gamma_r \cap \gamma_{r'} \subseteq H$.
	
	Suppose $F \subseteq \gamma_r$; we wish to show $F \subseteq \gamma_{r'}$.
	Let $q \in F$.
	By Lemma~\ref{lem:finite-relint-dim}(i), we have $p \in \relint{F}$, so there exists $\epsilon > 0$ so that $q' = p - \epsilon (q-p) \in F$.
	
	Now, suppose for contradiction that $q \not\in \gamma_{r'}$.
	Then $v \cdot q < 0$: containment in $\gamma_r$ gives $v \cdot q \leq 0$, and if $v \cdot q = 0$ then $q \in \gamma_r \cap H \implies q \in \gamma_{r'}$, a contradiction.
	But, noting that $p \in H$, we have $v \cdot q' = -\epsilon (v \cdot q) > 0$, so $q'$ is not in $\gamma_{r}$.
	This contradicts the assumption $F \subseteq \gamma_r$.
	Therefore, we must have $q \in \gamma_{r'}$, so we have shown $F \subseteq \gamma_{r'}$.
	Because $r$ and $r'$ were completely symmetric, this completes the proof.
\end{proof}

\section{Omitted Examples}\label{app:omitted-examples}
\paragraph{Discrete problem with no target loss (Quadrant 2).}
Consider the following scenario where someone is deciding how to dress for the weather based on a meteorologist's forecast.
Consider the three outcomes $\Y = \{$rainy, sunny, snowy$\}$, and we suppose we want to have some bias towards health and safety, so the meteorologist should only predict sunny weather if $Pr[$sunny $|$ weather data$] \geq 3/4$.
Otherwise, they should predict whatever is more likely  given the weather data: rain or snow.

We can now model this problem by a property with the reports $\R = \Y$, and have 
\begin{align*}
\gamma(p) &= \begin{cases}
\text{sunny} & p_{\text{sunny}} \geq 3/4 \\
\text{rainy} & p_{\text{sunny}} \leq 3/4 \wedge p_{\text{rainy}} \geq p_{\text{snowy}} \\
\text{snowy} & p_{\text{sunny}} \leq 3/4 \wedge p_{\text{snowy}} \geq p_{\text{rainy}} \\
\end{cases}~,~
\end{align*} 
shown in Figure~\ref{fig:t-example}.
Since the cells of elicitable properties in the simplex form a power diagram~\citep{lambert2009eliciting}, we know that there is actually \emph{no} target loss that directly elicits this problem.
Constructing a consistent surrogate for this task is ill-defined without Definition~\ref{def:consistent-prop}.
The function $\propdis(r,p) = \Ind{r \not \in \gamma(p)}$ 
now allows us to use Definition~\ref{def:consistent-prop} to think about consistent surrogates for this task.

Intuitively, since the feasible subspace dimension bound would be lowest at the distribution $p = (1/8, 3/4,1/8)$, we might want to test Corollary~\ref{cor:Pcodim-flat-single-val-prop} or Corollary~\ref{cor:Pcodim-flat-elic-relint-prop} at $p$.
However, we cannot apply either at $p$ since $\gamma(p) = \{$rainy, snowy, sunny$\}$ but the property is not elicitable.
\citet[Theorem 16]{ramaswamy2016convex} cannot draw any conclusions about this property for two reasons: first, we are given a target property instead of a target loss.
Second, since the property is not elicitable (hence why there can be no target loss), we observe $\dim(\Sc_{\gamma_{\text{rainy}}}(p)) \neq \dim(\Sc_{\gamma_{\text{sunny}}}(p))$, contradicting the requirements of~\citet[Lemma 23]{ramaswamy2016convex}.

However, our bounds from Corollary~\ref{cor:Pcodim-flat-single-val-prop} on the distribution $q = (1/8, 3/4 - \epsilon, 1/8 + \epsilon)$ for a small enough $\epsilon > 0$, which we can apply since $\gamma(q) = \{$snowy$\}$, suggest that the convex elicitation complexity $\eliccvx(\gamma) \geq 2$, since there is no way to draw a $1$-flat (a line, since $q \in \relint{\simplex}$) through $q$ while staying in just one level set on the simplex.

This example also extends to other decision-tree-like properties that do not have an explicit or easily constructed target loss.
\begin{figure}
	\centering
	\includegraphics[width=0.3\linewidth]{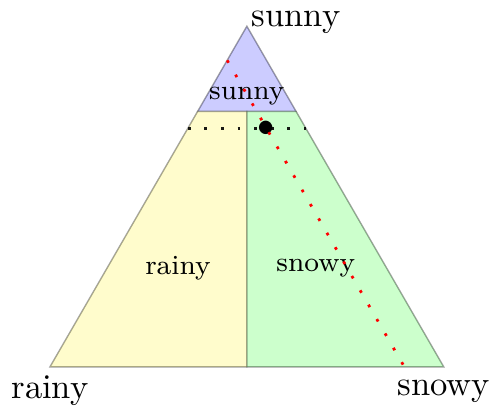}
	\caption{A meteorology example with a bias towards citizen safety.}
	\label{fig:t-example}
\end{figure}

\end{document}